\newcommand{\bPsi}{\boldsymbol{\Psi}}\newcommand{\tbPsi}{\tilde{\bPsi}}\newcommand{\bS}{\mathbf{S}}\newcommand{\bB}{\mathbf{B}}
\newcommand{\bX}{\mathbf{X}}
\newcommand{\bT}{\mathbf{T}}\newcommand{\bR}{\mathbf{R}}\newcommand{\bU}{\mathbf{U}}
\newcommand{\bA}{\mathbf{A}}\newcommand{\tbA}{\tilde{\mathbf{A}}}
\newcommand{\ba}{\mathbf{a}}\newcommand{\tpsi}{\tilde{\psi}}\newcommand{\tomega}{\tilde{\omega}}
\newcommand{\tba}{\tilde{\ba}}
\newcommand{\bI}{\mathbf{I}}
\newcommand{\bbE}{\mathbb{E}}\newcommand{\bW}{\mathbf{W}}
\newcommand{\cC}{\mathcal{C}}\newcommand{\cD}{\mathcal{D}}\newcommand{\cF}{\mathcal{F}}\newcommand{\cL}{\mathcal{L}}\newcommand{\cP}{\mathcal{P}}\newcommand{\bLmd}{\boldsymbol{\Lambda}}
\newcommand{\cO}{\mathcal{O}}
\newcommand{\bZ}{\mathbf{Z}}\newcommand{\bSig}{\boldsymbol{\Sigma}}
\newcommand{\bl}{\mathbf{1}}\newcommand{\bo}{\mathbf{0}}\newcommand{\be}{\mathbf{e}}
\newcommand{\bx}{\mathbf{x}}\newcommand{\bz}{\mathbf{z}}
\newcommand{\bu}{\mathbf{u}}\newcommand{\bb}{\mathbf{b}}
\newcommand{\cN}{\mathcal{N}}
\newcommand{\beps}{\boldsymbol{\epsilon}}\newcommand{\bgamma}{\boldsymbol{\gamma}}\newcommand{\bmu}{\boldsymbol{\mu}}
\newcommand{\btheta}{\boldsymbol{\theta}}
\newcommand{\bvarphi}{\boldsymbol{\varphi}}
\newcommand{\tmu}{\tilde{\mu}}\newcommand{\tbmu}{\tilde{\bmu}}
\newcommand\refe[1]{(\ref{#1})}
\newcommand\reff[1]{Fig.~\ref{#1}}\newcommand\reft[1]{Tab.~\ref{#1}}\newcommand\refs[1]{Sec.~\ref{#1}}
 \newcommand\refthm[1]{Theorem~\ref{#1}}\newcommand\refobs[1]{Observation~\ref{#1}}
\newtheorem{thm}{Theorem}
\newtheorem{obs}{Observation}
\def\diag{\mbox{diag}}
\def\cov{\mbox{cov}}
\def\s2{\sigma^2}
\begin{document}
\begin{frontmatter}
\title{Choosing the number of factors in factor analysis with incomplete data via a hierarchical Bayesian information criterion}
 \author[Jianhua]{Jianhua Zhao} \ead{jhzhao.ynu@gmail.com}
 \cortext[cor1]{Corresponding author.}
\author[Jianhua,C. Shang]{Changchun Shang} 
\author[S. Li]{Shulan Li} 
 \author[L. Xin]{Ling Xin}
\author[Philip]{Philip L.H. Yu\corref{cor1}} \ead{plhyu@eduhk.hk} %
\address[Jianhua]{School of Statistics and Mathematics, Yunnan University of Finance and Economics, Kunming, 650221, China.} 
\address[C. Shang]{College of Science, Guilin University of Technology, Guilin, 541006, China}
\address[S. Li]{School of Accounting, Yunnan University of Finance and Economics, Kunming, 650221, China.}
 \address[L. Xin]{Division of Business and Management, BNU-HKBU United International College, Zhuhai, 519087, China.}
 \address[Philip]{Department of Mathematics and Information Technology, The Education University of Hong Kong}
 

\begin{abstract}
The Bayesian information criterion (BIC), defined as the observed data log likelihood minus a penalty term based on the sample size $N$, is a popular model selection criterion for factor analysis with complete data. This definition has also been suggested for incomplete data. However, the penalty term based on the `complete' sample size $N$ is the same no matter whether in a complete or incomplete data case. For incomplete data, there are often only $N_i<N$ observations for variable $i$, which means that using the `complete' sample size $N$ implausibly ignores the amounts of missing information inherent in incomplete data. Given this observation, a novel criterion called hierarchical BIC (HBIC) for factor analysis with incomplete data is proposed. The novelty is that it only uses the actual amounts of observed information, namely $N_i$'s, in the penalty term. Theoretically, it is shown that HBIC is a large sample approximation of variational Bayesian (VB) lower bound, and BIC is a further approximation of HBIC, which means that HBIC shares the theoretical consistency of BIC. Experiments on synthetic and real data sets are conducted to access the finite sample performance of HBIC, BIC, and related criteria with various missing rates. The results show that HBIC and BIC perform similarly when the missing rate is small, but HBIC is more accurate when the missing rate is not small.

\begin{keyword} Factor analysis, BIC, Model selection, Maximum likelihood, Incomplete data, Variational Bayesian  \end{keyword}
\end{abstract}
\end{frontmatter}
\section{Introduction}\label{sec:intro}
Factor analysis (FA), which aims to identify the common characteristics among a set of variables, is a useful tool for data visualization, interpretation, and analysis. The parameter estimation can be easily performed using maximum likelihood (ML) method via the popular expectation maximization (EM)-like algorithm \citep{rubin_mlfa,chuanhai_ecme,zhao2008-efa}. In addition, the covariance structure of FA offers significant advantages over full/diagonal/ scalar covariance in density modeling for high-dimensional data, because of its capability of providing an appropriate trade-off between overfitting
full covariance and underfitting diagonal/scalar covariance \citep{bishop-ppca}.

For complete data, many model selection criteria can be adopted to find the trade-off, namely to determine the number of factors $q$, e.g., Akaike's information criterion (AIC) \citep{Akaike1987}, consistent AIC (CAIC) \citep{bozdogan1987model}, Bayesian information criterion (BIC) \citep{Schwarz1978-BIC}, etc. Among them, BIC is one very popular criterion, due to its theoretical consistency \citep{Shao1997-BIC} and satisfactory performance in applications. Formally, BIC is defined as the observed data log likelihood minus a penalty term depending on the sample size $N$. For data with missing values, \cite{Song2008-misfa} have suggested that the BIC with the `complete' sample size $N$ in the penalty term still can be used to determine the number of factors. However, to our knowledge, it seems that a theoretical justification for why BIC can be used for incomplete data is still missing. 

More importantly, for incomplete data, the actual sample size is only $N_i$ $(<N)$ at the level of variable $i$, which means that the penalty term of BIC, using the `complete' sample size $N$, implausibly ignores the amounts of missing information inherent in incomplete data, namely $(N-N_i)$'s. In \refs{sec:mot}, we consider a simple $d$-parameter model with a set of $N$ incomplete realizations of multivariate vector $\bx$. Under this model, the BIC approximation can be applied at two hierarchical levels. At the traditional higher level of vector $\bx$, the first BIC penalty is $(d/2)\log{N}$, but at the lower level of variables $x_i$, the second BIC penalty is $\sum_{i=1}^d(1/2)\log{N_i}$, which is lighter than the first one. The second penalty is appealing since it uses the actual amounts of observed information, namely $N_i$'s.     

Inspired by the second penalty, we are interested in developing a better criterion than BIC for factor analysis with incomplete data that makes use of the actual amounts of observed information, and investigating its performance in model selection. In this paper, we take the typical assumption that the missingness mechanism is missing at random (MAR) \citep{little_samd}. That is, given the observed part of $\bx$, the missingness does not depend on the missing part. We propose in this paper a novel criterion we call hierarchical BIC (HBIC). The novelty is that the approximation is performed at the lower level of the parameter $\btheta_i$ that specifies variables $x_i$, rather than the traditional higher level of $\btheta$ that specifies the whole vector $\bx$. Consequently, the proposed HBIC differs from BIC in that it penalizes the parameter $\btheta_i$ only using $N_i$, the actual sample size of variable $x_i$, rather than the `complete' sample size $N$. Theoretically, we show that (i) HBIC is a large sample approximation of variational Bayesian (VB) lower bound \citep{bishop-prml}; (ii) BIC is a further approximation of HBIC by dropping an order-1 term that does not depend on $N$. This means that HBIC shares the theoretical consistency of BIC. However, this order-1 term can be useful for incomplete data with finite sample size, as will be seen from our experiments in \refs{sec:expr}.

The remainder is organized as follows. In \refs{sec:mot}, we consider a simple model that inspires our problem. In \refs{sec:fa.bic}, we review FA model and the use of BIC under FA with complete and incomplete data. In \refs{sec:bic.obs}, we propose HBIC for incomplete data. We conduct an empirical study to compare HBIC and BIC in \refs{sec:expr}. We end the paper with some concluding remarks in \refs{sec:cons}.

\subsection{Notations} 
The following notations are used throughout this paper. Let $\bX_{obs}=\{\bx_n^o\}_{n=1}^N$ is a set of incomplete realizations of the $d$-dimensional $\bx=(x_1,x_2,\dots,x_d)'$ and the dimensionality of $\bx_n^o$ is $d_n$. $O_i$ denotes the set of indices $n$ for which $x_{ni}$ is observed and $N_i$ is the number of elements in $O_i$, namely the observed sample size or number of observed values of $x_i$. $O_n$ denotes the set of indices $i$ for which $x_{ni}$ is observed. Let $\bl_d$, $\bo_d$ stands for the $d$-dimensional vector whose all entries equal 1, 0, respectively. $\bI_d$ denotes a $d\times d$ identity matrix. For notation convenience, the subscript $d$ in $\bl_d$, $\bo_d$ and $\bI_d$ will be dropped if it is apparent from the context. $\mbox{blkdiag}(\bA,\bB)$ denotes the block diagonal concatenation of matrix $\bA$ and $\bB$. \mbox{linspace}$(a, b, d)$ denotes a vector of $d$ linearly
equally spaced points between $a$ and $b$, namely $(a,a+(b-a)/(n-1),\dots,b)'$.

\section{Motivation}\label{sec:mot}
We begin by constructing a simple model that captures the essential features of our problem. Suppose that a $d$-dimensional random vector $\bx$ follows the multivariate normal distribution $\cN(\bmu,\bI)$ with mean $\bmu=(\mu_1,\mu_2,\dots,\mu_d)'$ and the identity covariance matrix $\bI$. Obviously, the number of parameters in this model is $d$. Given $N$ incomplete realizations $\bX_{obs}=\{\bx_n^o\}_{n=1}^N$, and the sample size of variable $x_i$ being $N_i(< N)$. There are two ways to apply BIC approximation.
\begin{enumerate}[(i)]
\item Approximation 1: as shown in \reff{fig:demo}~(a), at the traditional higher level of vector $\bx$, the definition given by \cite{Song2008-misfa} (detailed in \refs{sec:bicobs}) yields the penalty $(d/2)\log{N}$, which is the same as that for complete data.
\item Approximation 2: as shown in \reff{fig:demo}~(b), at the lower level of variables $x_i$, since $x_i$'s are independent of each other, that is, $\log{p(\bx)}=\sum_{i=1}^d\log{p(x_i)}$, it is reasonable to apply BIC approximation to each log marginal distribution $\log{p(x_i)}$, which gives $(1/2)\log{N_i}$. Summing over all $d$ variables yields a new penalty $\sum_{i=1}^d(1/2)\log{N_i}$.
\end{enumerate}

Obviously, these two penalties are the same only when $N_i=N,i=1,\cdots d$, and can be significantly different when $N_i$ is much smaller than $N$. Intuitionaly, Approximation 2 would be more accurate since it uses the actual amounts of observed information, namely $N_i$'s. 

However, Approximation 2 is too limited because of the strong assumption that the variables are independent of each other. In this paper, we will consider the general case where there are correlations among variables. To be specific, we will consider factor analysis model and develop a new criterion applicable to such a general case. As will be seen in \refs{sec:bic.obs}, under this simple model, our proposed HBIC will degenerate to Approximation 2, which makes clear the significance of developing new criteria that only consider the actual amounts of observed information. 

\begin{figure*}[tbh]
	\centering
	\subfigure[]{\includegraphics[scale=0.8]{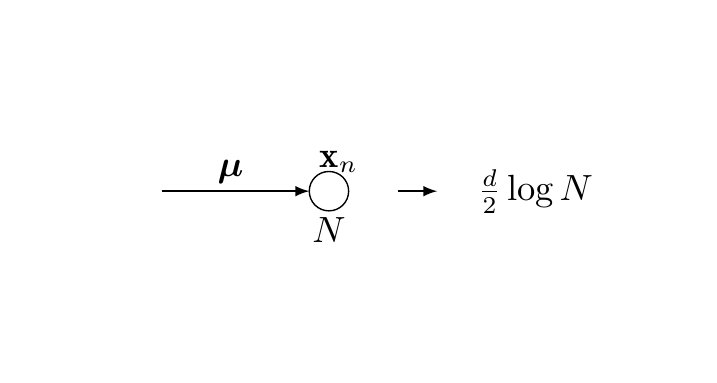}
	}
	\subfigure[]{\includegraphics[scale=0.8]{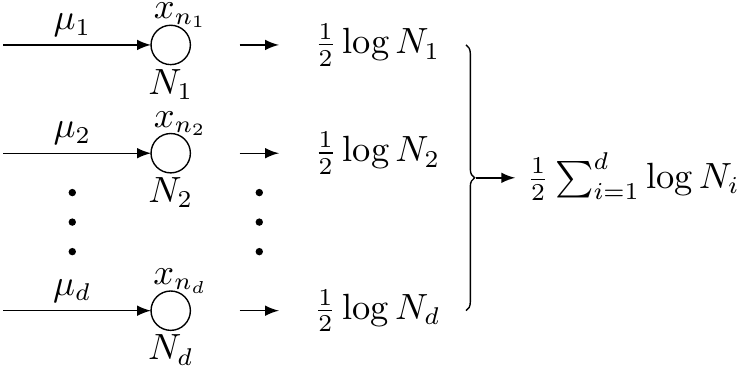}
	}		
	\caption{Two BIC approximations at two hierarchical levels. (a) Approximation 1: higher level of multivariate vector $\bx$; Approximation 2: lower level of variables $x_i$, $i=1,2,\dots,d$.}
	\label{fig:demo}
\end{figure*}
%
\section{Factor analysis and Bayesian information criterion}\label{sec:fa.bic}
\subsection{Factor analysis (FA) model}\label{sec:fa}
The classical $q$-factor model \citep{lawley_fasm} is defined as
\begin{equation}\label{eqn:fa}
x_i=\ba_i'\bz+\mu_i+\epsilon_i,\quad i=1,2,\dots,d,
\end{equation}
where $\bx=(x_1,x_2,\dots,x_d)'$ is a $d$-dimensional data vector, $\bmu=(\mu_1,\mu_2,\dots,\mu_d)'$ is a $d$-dimensional mean vector, $\bA=(\ba_1';\cdots;\ba_d')$ is a $d\times
k$ factor loading matrix with the column vector $\ba_i$ being the row $i$ of $\bA$, $\bz$ is a $k$-dimensional latent
factor vector which is assumed to follow the standard $k$-variate normal distribution $\cN(\mathbf{0},\bI)$, the error term $\beps=(\epsilon_1,\epsilon_2,\dots,\epsilon_d)'$ is a $d$-dimensional unique factor vector, which is assumed to follow normal $\cN(\bo,\bPsi)$ and independent of $\bz$. Here, $\bPsi=\hbox{diag}\{\psi_1,\psi_2,\dots,\psi_d\}$ is a positive diagonal matrix.

Under model \refe{eqn:fa}, $\bx\sim\cN(\bmu,\bSig)$,
where $\bSig=\bA\bA'+\bPsi$. It can be seen from \refe{eqn:fa} that FA model is invariant if we replace $\bA$ by $\bA\bR$ and $\bz$ by $\bR'\bz$, where $\bR$ is an orthogonal matrix, which means that the estimate of $\bA$ can only be determined up a rotation. Thus the number of free parameter in FA model is $\cD(k)=d(k+2)-k(k-1)/2$ \citep{lawley_fasm}. To avoid over-parameterization, the number of degrees of freedom in $\bSig$ should not exceed that of a full $d\times d$ covariance matrix, $d(d+1)/2$ \citep{Beal2003-phd}, which yields:
\begin{equation*}
{k}_{max}\leq d+\frac12(1-\sqrt{1+8d}).\label{eqn:q.dim}
\end{equation*}
Given a set of i.i.d
observations $\bX=\{\bx_n\}_{n=1}^N$, the data log likelihood is
\begin{equation}
\cL(\btheta)=-\frac12\sum\nolimits_{n=1}^N\{d\log{(2\pi)}+\log{|\bSig|}+(\bx_n-\bmu)'\bSig^{-1}(\bx_n-\bmu)\}.
\label{eqn:likelihood}
\end{equation}
The maximum likelihood estimate of $\btheta$ that maximizes $\cL$ in \refe{eqn:likelihood} can be easily found by many algorithms such as the expectation maximization (EM) \citep{rubin_mlfa}, the parameter-expanded EM (PX-EM) \citep{chuanhai_ecme}, conditional maximization (CM) algorithm \citep{zhao2008-efa}, etc. 


\subsection{Bayesian information criterion (BIC) for factor analysis with complete data}\label{sec:bicc}
Since the log likelihood $\cL$ under FA model is a
nondecreasing function of the number of factors
$k$, it can not be adopted as model complexity
criterion. Several model selection criteria have been proposed to deal with this problem. Three typical criteria are Akaike's information criterion (AIC) \citep{Akaike1987}, consistent AIC (CAIC) \citep{bozdogan1987model} and Bayesian information criterion (BIC) \citep{Schwarz1978-BIC}. They can be unified into the following form
\begin{equation}
	\cL^*(k,\hat{\btheta}(k))=\cL(\bX|\hat{\btheta}(k))-\frac{\cD(k)}2\cC(N),\label{eqn:cri}
\end{equation}
where $\hat{\btheta}(k)$ denotes the ML estimate of parameter $\btheta$ in $k$-factor model, $\cD(k)=d(k+2)-k(k-1)/2$ is the number of free parameters, and $(\cD(k)/2)\cC(N)$ is a penalty term that penalizes the higher values of $k$. In addition, $\cC(N)=2$ for AIC, $\cC(N)=\log{N}+1$ for CAIC, and $\cC(N)=\log{N}$ for BIC.

Among them, BIC is one very popular criterion for determining the number of factors in FA model, due to its theoretical consistency \citep{Schwarz1978-BIC} and satisfactory performance in applications. The penalty term of BIC can be written as
\begin{IEEEeqnarray}{rCl}
	\cP_{bic}(\hat{\btheta}(k))&=&\frac{\cD(k)}2\log{N}\nonumber\\
	&=&\sum\nolimits_{i=1}^k\frac{i+2}2\log{N}+\sum\nolimits_{i=k+1}^d\frac{q+2}2\log{N}.\label{eqn:BIC}
\end{IEEEeqnarray}


\subsection{BIC for factor analysis with incomplete data}\label{sec:bicobs}
To choose $k$ for incomplete data, \cite{Song2008-misfa} suggest using the following form of criterion
\begin{equation}
	\cL^*(k,\hat{\btheta}(k))=\cL_o(\bX_{obs}|\hat{\btheta}(k))-\cP_{bic}(\hat{\btheta}(k)).\label{eqn:incomp.bic}
\end{equation}
Comparing \refe{eqn:cri} with \refe{eqn:incomp.bic}, it can be seen that the complete data log likelihood $\cL$ is now replaced by the observed one $\cL_o$. They have found that the BIC in \refe{eqn:incomp.bic} has good performance. Note that BIC uses the `complete' data sample size $N$ since $\cC(N)=\log{N}$, which means that the penalty terms of BIC are the same no matter whether in incomplete or complete data case. However, for incomplete data, the actual amounts of observed information is only $N_i$'s, as detailed in \refs{sec:intro}. Thus BIC implausibly ignores the amounts of missing information inherent in incomplete data $(N-N_i)'s$. 




Like in complete data case, BIC is usually implemented through a two-stage procedure. Detailedly, given a range of values of $k$ from $k_{min}$ to $k_{max}$, which is assumed to include the optimal one, the two-stage procedure first obtain the ML estimate $\hat{\btheta}(k)$ for each model $k$ and then choose the value
\begin{equation*}
	\hat{k}=\mathop{\hbox{arg\,max}}\limits_{k}\{\cL_o^*(q,\hat{\btheta}(k))\}.\label{eqn:cri1}
\end{equation*}
Given $k$, the ML estimate $\hat{\btheta}(k)$ can be obtained by EM-type algorithms. We give two EM algorithms in \refs{sec:ifa.mle}. 
%

\subsection{Maximum likelihood estimation for factor analysis with incomplete data}\label{sec:ifa.mle}
In this subsection, we develop iterative algorithms to find the ML estimate of parameter $\btheta$ for FA with incomplete data. Let $\bx^o$, $\bmu^o$ and $\bSig^{oo}$ be the observed parts of $\bx$, $\bmu$ and $\bSig$, respectively. By the well known normal result, we have $\bx^o\sim\cN(\bmu^o,\bSig^{oo})$. Then the observed log likelihood of $\btheta=(\bmu, \bA, \bPsi)$ for incomplete data $\bX_{obs}=\{\bx_n^o\}_{n=1}^N$, is
\begin{equation}\label{eqn:mislike}
	\cL_o(\btheta\,|\,\bX_{obs})=\sum\nolimits^N_{n=1}\log{p(\bx^o_n|\btheta)}=-\frac12\sum\nolimits^N_{n=1}\big\{d_n\log{(2\pi)}+\log{|\bSig_n^{oo}|}+(\bx_n^o-\bmu_n^o)'{\bSig_n^{oo}}^{-1}(\bx_n^o-\bmu_n^o)\big\}.
\end{equation}
We should use two EM-type algorithms to maximize $\cL_o$ in \refe{eqn:mislike} because of their simplicity and stability. In \refs{sec:ifa.ecme}, we give an ECME algorithm \citep{chuanhai_ecme}, which is used in our experiments due to its faster computation. In \refs{sec:ifa.ecm}, we give an ECM algorithm \citep{meng-ecm}, which is useful for the development of our proposed criterion.
\subsubsection{The ECME algorithm}\label{sec:ifa.ecme}
In this algorithm, the missing values of $\bx$ are treated as the missing data, namely $\bX_{mis}=\{\bx_n^m\}_{n=1}^N$. The complete data log likelihood of $\btheta$ for complete data $\bX=(\bX_{obs}, \bX_{mis})$ is
\begin{equation}\label{eqn:comlike1} \cL_1(\btheta\,|\bX)=\sum\nolimits^N_{n=1}\log{p(\bx_n|\btheta)}=-\frac12\sum\nolimits^N_{n=1}\left\{d\log{(2\pi)}+\log{|\bSig|}+({\bx_n}-\bmu)'\bSig^{-1}({\bx_n}-\bmu)\right\},
\end{equation}


Let $\bPsi=\mathrm{diag}(\psi_1,\psi_2,
\dots,\psi_d),$ $
\bPsi_i\triangleq\mathrm{diag}(\tpsi_1,
\dots,\tpsi_{i-1},
\psi_i,\psi_{i+1},\dots,\psi_d)$. Given an initial
$\bPsi$, the ECME algorithm that maximizes $\cL_o$ in \refe{eqn:mislike} consists of an E-step and three CM-steps.
\begin{itemize}
	\item \textbf{CML-Step 1:} Given $(\bA,\bPsi)$, maximizing $\cL_o$ in \refe{eqn:mislike} w.r.t. $\bmu$ yields $\tbmu$.
	\item \textbf{E-step:}  Given $\bX_{obs}$ and $(\tbmu, \bA, \bPsi$), compute the expected $\cL_1$ to obtain $Q$ function.
	\item \textbf{CMQ-Step 2:} Given $(\tbmu, \bPsi$), maximizing $Q_1$ w.r.t. $\bA$ yields $\tbA$.
	\item \textbf{CMQ-Step 3:} Given $(\tbmu, \tbA, \bPsi_i$), maximizing $Q_1$ w.r.t. $\psi_i$ yields $\tpsi_i$, sequentially for $i=1,2,\dots,d$.
\end{itemize}
The updating formula for $(\tbmu, \tbA, \tbPsi)$ has been presented in \cite{zhao2014-fa-auto}. For completeness, we also provide them in \refs{sec:app.ecme}.
\subsubsection{The ECM algorithm}\label{sec:ifa.ecm}
Let the complete data be $(\bX_{obs},\bZ)$, where the missing data $\bZ=\{\bz_n\}_{n=1}^N$. In contrast to the ECME algorithm in \refs{sec:ifa.ecme}, the ECM algorithm does not treat the missing observations $\bX_{mis}$ as part of complete data. Similar treatments are also adopted in developing efficient algorithms for fitting FA and the closely related probabilistic principal component analysis (PPCA) on sparse and high-dimensional data \citep{roberts2014factor,Ilin2010-bpca}. Below we give a sketch under this treatment.

From FA model \refe{eqn:fa}, the complete data log likelihood of $\btheta$ for complete data $(\bX_{obs},\bZ)$ is
\begin{equation*}\label{eqn:comlike2} \cL_2(\btheta|\bX_{obs},\bZ)=\sum\nolimits^N_{n=1}\log{p(\bx^o_n,\bz_n|\btheta)}=-\frac12\sum\nolimits^d_{i=1}\sum\nolimits_{O_i}\big\{\log{(2\pi)}+\log{\psi_i}+\frac{1}{\psi_i}(x_{ni}-\ba_i'\bz_n-\mu_i)^2\big\},
\end{equation*}
Given an initial $\btheta$, the ECM algorithm alternates an E-step and two CM-steps.
\begin{itemize}
	\item\textbf{E-step:}  Given $\bX_{obs}$ and $\btheta=(\bmu, \bA, \bPsi)$, compute the expected $\cL_2$ w.r.t. the posterior distribution $p(\bZ|\bX_{obs})$.
\end{itemize}
\begin{IEEEeqnarray*}{rCl}
	Q_2(\btheta)&=&\bbE[\cL_2(\btheta|\bX_{obs},\bZ)|\bX_{obs}]\\
	&=&-\frac12\sum\nolimits^d_{i=1}\sum\nolimits_{O_i}\big\{\log{\psi_i}+\frac{1}{\psi_i}\bbE\left[(x_{ni}-\ba_i'\bz_n-\mu_i)^2|\bx_n^o\right]\big\}+c,\label{eqn:mis.Q2}
\end{IEEEeqnarray*}
where $c$ is a constant. The covariance matrix of the posterior distribution $p(\bz_n|\bx_n^o)$ is given by
\begin{equation*}
	\bSig_{\bz_n}=\left(\sum \nolimits_{i \in O_n} \frac1{\psi_i}\ba_i\ba_i' + \bI\right)^{-1},
\end{equation*}
and the required conditional expectations in \refe{eqn:mis.Q2} $\bbE[\bz_n|\bx_n^o]$ and $\bbE[\bz_n \bz_n'|\bx_n^o]$ are computed by 
\begin{IEEEeqnarray*}{rCl}
	\bbE[\bz_n|\bx_n^o] &=& \bSig_{\bz_n} \sum \nolimits_{i \in O_n} \frac1{\psi_i}\ba_i(x_{ni}-\mu_i), \\
	\bbE[\bz_n \bz_n'|\bx_n^o] &=& \bbE[\bz_n|\bx_n^o] \bbE[\bz_n|\bx_n^o]' + \bSig_{\bz_n}.  
\end{IEEEeqnarray*}
\begin{itemize}
	\item \textbf{CM-Step 1:} Given $(\bA, \bPsi)$, maximize $Q_2$ w.r.t. $\mu_i$ yields $\tmu_i$, $i=1,2,\dots,d$, as follows.
\end{itemize}
\begin{equation}
	\tmu_i = \frac{1}{N_i} \sum \nolimits_{n \in O_i}(x_{ni} - \ba_i'\bbE[\bz_n |x_n^o]).\label{eqn:ecm.mui}
\end{equation}
\begin{itemize}
	\item \textbf{CM-Step 2:} Given $\tbmu$, maximize $Q_2$ w.r.t. $(\ba_i, \psi_i)$ yielding $(\tba_i, \tpsi_i)$, $i=1,2,\dots,d$, as follows.
\end{itemize}
\begin{IEEEeqnarray}{rCl}
	\tba_i &=& \left(\frac1{N_i}\sum \nolimits_{n \in O_i} \bbE[\bz_n \bz_n'|\bx_n^o]\right)^{-1} \frac1{N_i}\sum \nolimits_{n \in O_i} (x_{ni}-\tmu_i) \bbE[\bz_n|\bx_n^o], \label{eqn:ecm.ai}\\
	\tpsi_i &=& \frac{1}{N_i} \sum \nolimits_{n \in O_i}\{(x_{ni}-\tmu_i)^2- (x_{ni}-\tmu_i)\tba_i'\bbE[\bz_n|\bx_n^o]\}.\label{eqn:ecm.psii}
\end{IEEEeqnarray}
\subsubsection{Actual sample size of $\btheta_i=(\mu_i, \ba_i, \psi_i)$}\label{sec:ifa.actual}
From the the ECM algorithm in \refs{sec:ifa.ecm}, we have the following
\begin{obs}\label{obs:ecm.size}
The actual sample size of $\btheta_i$ is $N_i, i=1,2,\dots,d$, rather than the `complete' data sample size $N$.
\end{obs}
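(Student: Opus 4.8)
The plan is to read the claim off directly from the complete-data likelihood underlying the ECM algorithm of \refs{sec:ifa.ecm}, so no new machinery is needed. First I would exhibit the variable-wise decomposition
\begin{equation*}
\cL_2(\btheta|\bX_{obs},\bZ)=\sum\nolimits_{i=1}^d\ell_i(\btheta_i),\qquad \ell_i(\btheta_i)=-\frac12\sum\nolimits_{n\in O_i}\Big\{\log(2\pi)+\log\psi_i+\tfrac1{\psi_i}(x_{ni}-\ba_i'\bz_n-\mu_i)^2\Big\},
\end{equation*}
which shows that the block $\btheta_i=(\mu_i,\ba_i,\psi_i)$ enters $\cL_2$ only through $\ell_i$, and $\ell_i$ is a sum over exactly the $N_i=|O_i|$ indices for which $x_i$ is observed; the remaining $N-N_i$ cases never appear.

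Second, I would push this decomposition through the E- and CM-steps. Since conditioning on $\bX_{obs}$ and taking expectations is linear, $Q_2(\btheta)=\sum_{i=1}^d\bbE[\ell_i(\btheta_i)\,|\,\bX_{obs}]$, and each CM update of $\btheta_i$ — formulas \refe{eqn:ecm.mui}, \refe{eqn:ecm.ai}, \refe{eqn:ecm.psii} — is manifestly an average of $N_i$ terms indexed by $n\in O_i$. Equivalently, the complete-data (expected) information matrix for $\btheta_i$, namely $-\partial^2 Q_2/\partial\btheta_i\partial\btheta_i'$ evaluated at the MLE, is a sum of $N_i$ contributions, hence of order $N_i$ rather than $N$. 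This is the precise sense in which the ``actual sample size'' of $\btheta_i$ is $N_i$.

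Third, I would contrast this with the ECME route of \refs{sec:ifa.ecme}: its complete-data likelihood $\cL_1$ in \refe{eqn:comlike1} does carry $N$ summands, but only because the missing entries $\bX_{mis}$ have been imputed, and the imputed values add no genuine information about $\btheta_i$ beyond what the $N_i$ observed values of $x_i$ already supply; the nominal count $N$ therefore overstates the effective sample size, whereas the ECM bookkeeping retains only the $N_i$ that matter. This also explains why the simple model of \refs{sec:mot} has effective per-variable size $N_i$.

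The main obstacle, as I see it, is not any computation but making the phrase ``actual sample size'' unambiguous. I would resolve it by committing to the Fisher-information reading above — the $O(N_i)$ scaling of the per-block information — since it is exactly this scaling that will feed the $(1/2)\log N_i$ (rather than $(1/2)\log N$) contribution per scalar parameter in the HBIC derivation of \refs{sec:bic.obs}; any weaker, purely heuristic reading would reach the same conclusion but would be harder to use downstream.
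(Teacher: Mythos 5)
Your argument is correct and matches the paper's own justification, which likewise reads the claim directly off the ECM updates \refe{eqn:ecm.mui}--\refe{eqn:ecm.psii}: each block $\btheta_i=(\mu_i,\ba_i,\psi_i)$ is estimated solely from sums over $n\in O_i$, i.e.\ from $N_i$ observations (the paper adds only the degenerate remark that $N_i=0$ makes $\btheta_i$ inestimable). Your extra step of pinning down ``actual sample size'' as the $O(N_i)$ scaling of the per-block information matrix is not in the paper at this point, but it is exactly the reading the paper later invokes in \refs{sec:F.largeN} when $H_i$ is said to scale linearly with $N_i$, so it is a faithful sharpening rather than a different route.
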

From \refe{eqn:ecm.mui}--\refe{eqn:ecm.psii}, it can be observed that  $\btheta_i=(\mu_i, \ba_i, \psi_i)$, which specifies variable $x_i$, is estimated only based on the actual sample size $N_i$, instead of $N$. In particular, if there exits $N_i=0$, it is not possible to estimate the corresponding $\btheta_i$. \refobs{obs:ecm.size} motivates us to develop a new criterion only using the actual amounts of observed information, which extends Approximation 2 under the simple model in \refs{sec:mot} to general cases. 


\section{Novel hierarchical Bayesian information criterion (HBIC) for factor analysis with incomplete data}\label{sec:bic.obs}
Motivated by \refobs{obs:ecm.size} in \refs{sec:ifa.actual}, we propose in \refs{sec:bico1} a novel criterion called hierarchical BIC (HBIC) for model selection in FA model with incomplete data. The novelty is that it only uses the actual amounts of observed information $N_i$'s in the penalty term, rather than the `complete' sample size $N$ taken in the BIC penalty \refe{eqn:incomp.bic}. In \refs{sec:largeN}, we show that HBIC is a large sample limit of variational Bayesian (VB) lower bound. We discuss its relationship with BIC in \refs {sec:rlt.bic}. 

\subsection{The proposed criterion}\label{sec:bico1}
The proposed criterion HBIC also takes a similar form to \refe{eqn:incomp.bic}, consisting of the observed data log likelihood plus a new penalty term
\begin{equation}
	\cL_2^*(k,\hat{\btheta}(k))=\cL_o(\bX_{obs}|\hat{\btheta}(k))-\cP_{hbic}(\hat{\btheta}(k)).\label{eqn:incomp.hbic}
\end{equation}
Here $k$ is the number of factors, $\hat{\btheta}(k)$ denotes the ML estimate of parameter $\btheta$ in $k$-factor model and the penalty term is given by
\begin{IEEEeqnarray}{rCl}
\cP_{hbic}(\hat{\btheta}(k)) &=& \sum\nolimits_{i=1}^d\cP(\hat{\btheta}_i(k))=\sum\nolimits_{i=1}^d\frac{\cD_i(k)}2\log{N_i}\label{eqn:BIC.obs.p}\\
&=& \sum\nolimits_{i=1}^k\frac{i+2}2\log{N_i}+\sum\nolimits_{i=k+1}^d\frac{k+2}2\log{N_i},\nonumber
\end{IEEEeqnarray}
where $\hat{\btheta}_i(k)$ denotes the ML estimate of parameter $\btheta_i$ of variable $x_i$, $N_i$ is actual observed values of variable $x_i$, in the order that $N_1\leq N_2\leq, \dots,\leq N_d\leq N$, and $\cD_i(k)=i+2,i=1,\dots,k;\cD_i(k)=k+2,i=k+1,\dots,d$ is the number of free parameters in $\btheta_i=(\mu_i, \ba_i, \psi_i)$ with $\ba_i$ being the row $i$ of $\bA$ in the form \refe{eqn:vbfa2.A}. 

To use HBIC \refe{eqn:incomp.hbic}, we need to calculate the penalty $\cP_{hbic}(\hat{\btheta}(k))$ using the ascending-ordered $N_i$'s while compute the first term $\cL_o(\bX_{obs}|\hat{\btheta}(k))$ in the conventional way, e.g. using the ECME algorithm in \refs{sec:ifa.ecme}, as the quantity $\cL_o(\bX_{obs}|\hat{\btheta}(k))$ does not depend on the variable order of $\bx$.

From \refe{eqn:BIC.obs.p}, the HBIC penalty at the higher level of model parameter $\btheta$ comprises $d$ BIC penalties at the lower level of parameters $\btheta_i$ for variable $x_i$, using the actual observed sample sizes $N_i$ only. Although this criterion, to our knowledge, is new, the idea to penalize each model parameter only using its relevant sample size, is not completely new. For example, for model selection in mixture models, \cite{Gollini2014-mlta,zhao2014-hbic-mppca} uses a criterion that penalizes the parameter of each component only using its local effective sample size. Similar criteria in the context of hierarchical or random effects models have also been suggested in \cite{Pauler1998-hbic,Raftery2007-hbic}.

\subsection{Hierarchical BIC (HBIC): large sample limit of a lower bound on the marginal likelihood}\label{sec:largeN}

\subsubsection{Bayesian approaches to FA and the marginal likelihood}\label{sec:b.alg}
Given incomplete data $\bX_{obs}$, the objective in the Bayesian treatment of FA model is to evaluate the posterior distribution
$$p(k|\bX_{obs})\propto p(\bX_{obs}|k)p(k).$$ When there is no information other than the data $\bX_{obs}$, each model $k$ is generally assumed to be equally likely a priori. In this case, the interesting term is the marginal likelihood or model evidence $p(\bX_{obs}|k)$, which is obtained by integrating over the parameter space of $\btheta$
\begin{equation}
p(\bX_{obs})=\int{p(\bX|\btheta,k)p(\btheta)}d\btheta.\label{eqn:ifa.mgn.l}
\end{equation}
Here, $p(\btheta)$ is a prior distribution over parameter $\btheta=(\bmu, \bA,
\bvarphi)$, where $\bvarphi=\bPsi^{-1}$. For notation convenience, we omit the dependence on the $k$-factor model.

For FA model, it is computationally and analytically intractable to perform the integral \refe{eqn:ifa.mgn.l} exactly. In the case of complete data $\bX$, \cite{Lopes2004-bfa} propose a fully Bayesian learning algorithm by means of the computationally intensive sampling-based Markov Chain Monte Carlo (MCMC) method, which results in a stochastic approximation solution to the marginal likelihood $p(\bX)$. 

Unlike the ML method in \refs{sec:ifa.mle}, where the objective function $\cL_o$ in \refe{eqn:mislike} is the observed data log likelihood given parameter $\btheta$, the objective of Bayesian methods is the marginal likelihood, which integrates out parameters and can automatically penalize the model with more degrees of freedom \citep{Beal2003-phd}.

\subsubsection{Variational Bayesian learning algorithm and its lower bound on the marginal likelihood}\label{sec:vb.alg}
Variational Bayesian (VB) methods originate from machine learning community \citep{bishop-prml,wainwright2008graphical}, but have also become increasing popular in statistics community \citep{blei2017variational}. Compared with the sampling-based Bayesian methods in \refs{sec:b.alg}, VB methods trun the problem into an optimization problem, which yields a deterministic approximation solution to the marginal likelihood $p(\bX)$ and hence are computationally more efficient \citep{bishop-prml}. Some recent works include VB inference for variable selection in logistic regression models \citep{zhang2019novel}, VB inference for network autoregression models \citep{lai2022variational} and etc. For FA model with complete data $\bX$, \cite{zhao2009-vbfa} propose a VB learning algorithm. In the case of incomplete data $\bX_{obs}$, \cite{Ilin2010-bpca} propose several VB approximation solutions to the marginal likelihood $p(\bX_{obs})$ for the closely related PPCA model. 

Below we give a simple derivation of VB for FA with incomplete data, which is generally similar to that for PPCA model in \cite{Ilin2010-bpca}. By Jensen's inequality, the log marginal likelihood can be bounded by
\begin{IEEEeqnarray}{rCl}
	\log{p(\bX_{obs})}&=&\log{\int{p(\bX_{obs}, \bZ, \btheta)}}d\bZ d\btheta\nonumber\\
	& &\geq\int{q(\bZ, \btheta)\log{\frac{p(\bX_{obs}, \bZ, \btheta)}{q(\bZ, \btheta)}}}d\bZ
	d\btheta=\cF(q),\label{eqn:VB.bound}
\end{IEEEeqnarray}
where $q(\bZ, \btheta)$ is a free distribution of latent factors $\bZ$ and parameter $\btheta$. The difference between $\log{p(\bX_{obs})}$ and $\cF(q)$ in \refe{eqn:VB.bound} can also be expressed in terms of Kullback-Leibler (KL) divergence.
\begin{IEEEeqnarray}{rCl}
	\hbox{KL}(q||p)&=&\log{p(\bX_{obs})}-\cF(q)\nonumber\\
	&=&-\int{q(\bZ, \btheta)\log{\frac{p(\bZ, \btheta|\bX_{obs})}{q(\bZ,
				\btheta)}}}d\bZ  d\btheta\label{eqn:KL}.
\end{IEEEeqnarray}
It can be seen that maximizing $\cF$ in \refe{eqn:VB.bound} is
equivalent to minimizing the KL divergence \refe{eqn:KL} between $q(\bZ,
\btheta)$ and the true posterior $p(\bZ, \btheta|\bX_{obs})$. From \refe{eqn:KL}, the KL divergence is minimized when $q(\bZ, \btheta)=p(\bZ, \btheta|\bX_{obs})$, which we then substitute into \refe{eqn:VB.bound}. This leads to the equality $\log{p(\bX_{obs})}=\cF(q)$. However, this fails to simplify the problem as the true posterior $p(\bZ, \btheta|\bX_{obs})$ requires knowing the normalizing constant, namely the analytically intractable marginal likelihood $p(\bX_{obs})$. Instead, VB approaches this problem by utilizing a simpler factorized distribution $q(\bZ, \btheta)\approx q(\bZ)q(\btheta)$ to approximate $p(\bZ, \btheta|\bX_{obs})$ and aims to optimize a lower bound $\cF$ of the log marginal likelihood $\log{p(\bX_{obs})}$. The bound $\cF(q(\bZ),q(\btheta))$ is a functional of the free distributions $q(\bZ)$ and $q(\btheta)$.



Since $\bA$ in FA model can only be determined up to a rotation, we follow \cite{zhao2009-vbfa} to use
the following lower triangular matrix for $\bA$:
\begin{equation}\label{eqn:vbfa2.A}
\bA=\left(\begin{matrix}
a_{11}&0&0&\cdots&0&0\\
a_{21}&a_{22}&\color{red}{0}&\dots&0&0\\
\vdots&\vdots&\vdots&\ddots&\vdots&\vdots\\
a_{q-1,1}&a_{q-1,2}&a_{q-1,3}&\cdots&a_{q-1, q-1}&\color{red}{0}\\
a_{q,1}&a_{q,2}&a_{q,3}&\cdots&a_{q, q-1}&a_{q, q}\\
\vdots&\vdots&\vdots&\ddots&\vdots&\vdots\\
a_{d,1}&a_{d,2}&a_{d,3}&\cdots&a_{d, q-1}&a_{d, q}
\end{matrix}\right).
\end{equation}
Clearly, \refe{eqn:vbfa2.A} reduces the of free parameters of $\bA$ by $q(q-1)/2$ directly. However, different forms of $\bA$ that eliminates the rotation could yield different VB lower bounds \citep{zhao2009-vbfa}. Intuitionally, a row of $\bA$ containing more number of free parameters should be estimated by more number of observed values $N_i$ and thus \refe{eqn:vbfa2.A} is more suitable for the case $N_1\leq N_2\leq, \dots,\leq N_d\leq N$. If $N_i$'s are not in this ascending order, then $(x_1,\dots,x_d)$ can be rearranged so that $N_i$'s are in the ascending order. Therefore, in what follows we assume that $N_i$'s are in the ascending order. 

Let $\btheta_i=\{\mu_i,\ba_i,\varphi_i\}_{i=1}^d$ and $\ba_i$ be a column vector corresponding to the row $i$ of
$\bA$ (given by \refe{eqn:vbfa2.A} if $N_1\leq N_2\leq, \dots,\leq N_d\leq N$). As shown in \cite{zhao2009-vbfa}, if we use the prior $p(\btheta)=\prod_{i=1}^dp(\btheta_i)$, where the priors of $\btheta_i$ are independent of each other, we can obtain the factorization 
\begin{equation}
  q(\bZ, \btheta)=q(\bZ)q(\btheta)=q(\bZ)\prod\nolimits_{i=1}^dq(\btheta_i),\label{eqn:vb.factor}
\end{equation}
where the additional factorization $q(\btheta)=\prod_{i=1}^dq(\btheta_i)$ is called \emph{induced} factorization in \cite{bishop-prml}. Following \cite{Ilin2010-bpca,zhao2009-vbfa}, substitute \refe{eqn:vb.factor} into
\refe{eqn:VB.bound} and maximize $\cF$ over the distributions $q(\bZ)$ and $q(\btheta_i)$, $i=1,\cdots,d$, leading to the following VBEM updating steps:
\begin{itemize}
  \item VBE-step:
  \begin{equation}\label{eqn:vb.q.z}
    q(\bZ)=\sum\nolimits^N_{n=1}\log{q(\bz_n)}=\sum\nolimits^N_{n=1}\bbE[\log{p(\bx^o_n, \bz_n,
    \btheta)}]_{q(\btheta)}+c.
  \end{equation}
\end{itemize}
\begin{itemize}
  \item VBM-steps:
\begin{equation}
\log{q(\btheta_i)}=\sum\nolimits_{O_i}\bbE[\log{p(\bx^o_{ni}, \bz_n,\btheta_i)}]_{q(\bz_n)}+c, \label{eqn:q.thetai}
\end{equation}
\end{itemize}
where $i=1,\cdots,d$ and $c$ is a constant. Since \refe{eqn:vb.q.z} and
\refe{eqn:q.thetai} are coupled, the VBEM algorithm alternatively
iterates \refe{eqn:vb.q.z} and \refe{eqn:q.thetai} until convergence. 
From the VBEM algorithm in \refs{sec:vb.alg}, we have the following
\begin{obs}\label{obs:vbem.size}
	The actual sample size for the distribution $q(\btheta_i)$ is $N_i, i=1,2,\dots,d$, instead of the `complete' data sample size $N$.
\end{obs}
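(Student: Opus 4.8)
The plan is to read \refobs{obs:vbem.size} directly off the VBM fixed-point equation \refe{eqn:q.thetai}, in exactly the way \refobs{obs:ecm.size} was read off the ECM updates \refe{eqn:ecm.mui}--\refe{eqn:ecm.psii}. The two structural facts I would use are: (a) the prior factorizes as $p(\btheta)=\prod_{i=1}^d p(\btheta_i)$ with $\btheta_i=(\mu_i,\ba_i,\varphi_i)$; and (b) under the MAR assumption and the coordinatewise form of the FA model \refe{eqn:fa}, the observed-data complete likelihood factorizes over coordinates, $p(\bx_n^o,\bz_n\,|\,\btheta)=p(\bz_n)\prod_{i\in O_n}p(x_{ni}\,|\,\bz_n,\btheta_i)$, so that $\log p(\bX_{obs},\bZ,\btheta)=\sum_{i=1}^d\log p(\btheta_i)+\sum_{n=1}^N\log p(\bz_n)+\sum_{n=1}^N\sum_{i\in O_n}\log p(x_{ni}\,|\,\bz_n,\btheta_i)$.

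Next I would substitute this into the standard VB stationarity condition $\log q(\btheta)=\bbE_{q(\bZ)}[\log p(\bX_{obs},\bZ,\btheta)]+c$ \citep{bishop-prml}, invoke the induced factorization $q(\btheta)=\prod_{i=1}^d q(\btheta_i)$ of \refe{eqn:vb.factor}, and interchange the order of the double sum using $\sum_{n=1}^N\sum_{i\in O_n}=\sum_{i=1}^d\sum_{n\in O_i}$. This isolates the $\btheta_i$-block as $\log q(\btheta_i)=\log p(\btheta_i)+\sum_{n\in O_i}\bbE_{q(\bz_n)}[\log p(x_{ni}\,|\,\bz_n,\btheta_i)]+c_i$, which is precisely \refe{eqn:q.thetai} once the $\bz_n$-marginals and normalizing constants are folded into the constant term. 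The point to emphasize is that the data enter $q(\btheta_i)$ only through the sum indexed by $O_i$, which contains exactly $|O_i|=N_i$ terms. Equivalently, writing the update in exponential-family form, the natural parameter of $q(\btheta_i)$ equals the prior natural parameter plus a sum of $N_i$ expected sufficient statistics --- one contribution per observed value of $x_i$ --- so $q(\btheta_i)$ is a posterior-type distribution based on $N_i$ observations, degenerating to the prior $p(\btheta_i)$ when $N_i=0$. This is the precise sense in which the actual sample size for $q(\btheta_i)$ is $N_i$ rather than the complete-data size $N$.

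I do not expect a technical obstacle here, since the argument is essentially bookkeeping; the two places that warrant care are (i) confirming that $q(\btheta)=\prod_i q(\btheta_i)$ is \emph{induced} by the product prior together with the coordinatewise likelihood rather than imposed, which follows from \cite{zhao2009-vbfa}, and (ii) being explicit about what ``sample size of a distribution'' means, which I would pin down through the exponential-family / sufficient-statistic reading above. It is worth noting in passing that it is exactly this $N_i$-dependence of each $q(\btheta_i)$ that, upon applying a Laplace-type large-sample approximation to the individual factors, produces the per-coordinate penalty $(\cD_i(k)/2)\log N_i$ and hence the proposed HBIC \refe{eqn:incomp.hbic}.
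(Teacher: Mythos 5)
Your proposal is correct and takes essentially the same route as the paper, which likewise reads \refobs{obs:vbem.size} directly off the VBM update \refe{eqn:q.thetai} --- noting that the sum there runs over $O_i$ with $|O_i|=N_i$ terms --- and remarks on its consistency with the ECM updates. Your additional bookkeeping (the coordinatewise factorization of the joint, the interchange $\sum_{n}\sum_{i\in O_n}=\sum_{i}\sum_{n\in O_i}$, and the exponential-family reading of ``sample size'') merely makes explicit what the paper leaves implicit.
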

\refobs{obs:vbem.size} can be seen from \refe{eqn:q.thetai}, which is consistent with that in the ECM algorithm for ML estimate \refs{sec:ifa.actual}. In fact, the VBEM algorithm can reduce to the ECM algorithm in \refs{sec:ifa.ecm} if the parameter density $q(\btheta)$ is restricted to be the Dirac delta function as $q(\btheta)=\delta(\btheta-\btheta^*)$ \citep{bernardo2003variational}. In \refs{sec:F.largeN}, \refobs{obs:vbem.size} will be utilized to find the large sample limit of the bound $\cF$.

\subsubsection{Large sample limit of the lower bound on the marginal likelihood}\label{sec:F.largeN}
Substituting \refe{eqn:vb.factor} into
\refe{eqn:VB.bound}, we can write $\cF$ as
\begin{equation}\label{eqn:VB.bound.1}
  \cF=\underbrace{\bbE\left[\log{\frac{p(\bX_{obs},
  \bZ|\btheta)}{q(\bZ)}}\right]_{q(\bZ)q(\btheta)}}_{\cF_{\cD}}-\underbrace{\sum\nolimits_{i=1}^d\hbox{KL}[q(\btheta_i)||p(\btheta_i)]}_{\cF_{P}}.
\end{equation}

Let us consider the large sample limits of the two terms $\cF_{\cD}$ and $\cF_{P}$. We begin with $\cF_{P}$. It is proved in \cite{Beal2003-phd}
that under mild conditions, the variational posterior
distribution $q(\btheta)$ for exponential family models is approximately normal. Since FA model is a member in exponential family models and $q(\btheta)=\prod_{i=1}^dq(\btheta_i)$, it follows that $q(\btheta_i)$ is also approximately normal. This means that the KL divergence $\hbox{KL}[q(\btheta_i)||p(\btheta_i)]$ in $\cF_{P}$ can be calculated using a Guassian approximation. Thus we have that as
$N\rightarrow\infty$
\begin{equation*}
  \bbE[\log{q(\btheta_i)}]\approx\frac12\log{|2\pi H_i(\hat{\btheta}_i)|},\label{eqn:q.theta.lmt}
\end{equation*}
where $H_i(\hat{\btheta}_i)$ is the negative Hessian matrix of
$q(\btheta_i)$ evaluated at ML estimate $\hat{\btheta}_i$. Using
the fact that $H_i$ scales linearly with $N_i$, the actual sample size of $\btheta_i$ as shown in \refobs{obs:vbem.size}, we have that as
$N\rightarrow\infty$, $H_i/N_i$ converges to a constant
matrix, denoted by $H_{i0}$, and
\begin{eqnarray}
  \bbE[\log{q(\btheta_i)}]&=&\frac{\cD_i(k)}2\log{N_i}+\frac12\log{|2\pi
  H_{i0}|}+\cO(1)\nonumber\\
  &=&\frac{\cD_i(k)}2\log{N_i}+\cO(1), \label{eqn:q.ta.lmt.2}
\end{eqnarray}
where $\cD_i(k)$ is the number of free parameters of $\btheta_i$. For example, for $\bA$ in \refe{eqn:vbfa2.A}, $\cD_i(k)=i,i=1,\dots,k$, $\cD_i(k)=k,i=k+1,\dots,d$, and $\sum_{i=1}^d\cD_i(k)=d(k+2)-k(k-1)/2=\cD(k)$. From \refe{eqn:q.ta.lmt.2}, we obtain that, as $N\rightarrow\infty$,
\begin{eqnarray}
  \cF_{P}&=&\sum\nolimits_{i=1}^d\left(\bbE[\log{q(\btheta_i)}]-\bbE[\log{p(\btheta_i)}]\right)\nonumber\\
  &=&\sum\nolimits_{i=1}^d\left(\frac{\cD_i(k)}2\log{N_i}-\log{p(\hat{\btheta}_i)}\right)+\cO(1), \label{eqn:FP.lmt}
\end{eqnarray}


Next we analyze $\cF_{\cD}$. Since $q(\btheta_i)$ is approximately normal, it follows that, as $N\rightarrow\infty$, $q(\btheta_i)$ will be strongly peaked at $\hat{\btheta}_i$ \citep{Attias1999,Beal2003-phd} and we have $$q(\btheta_i)=\delta(\btheta_i-\hat{\btheta}_i)+\cO(1),$$ where $\delta(\cdot)$ is the Dirac delta function. Combining this result with \refe{eqn:vb.q.z}, we have $$q(\bz_n)=p(\bz_n|\bx^o_n,\hat{\btheta})+\cO(1), \hbox{as}\,\, N\rightarrow\infty.$$ Substituting this result into $\cF_D$, we obtain that
\begin{equation}\label{eqn:FD.lmt}
\cF_{\cD}=p(\bX_{obs}|\hat{\btheta})+\cO(1).
\end{equation}
Substituting \refe{eqn:FD.lmt} and \refe{eqn:FP.lmt} into \refe{eqn:VB.bound.1}, and further dropping the order-1 term $\sum\nolimits_{i=1}^d\log{p(\hat{\btheta}_i)}$, we obtain the following \refthm{thm:F.limit}.
\begin{thm}\label{thm:F.limit}
As the sample size $N\rightarrow\infty$, the VB lower bound in \refe{eqn:VB.bound.1}
  \begin{equation*}
  \cF=\cL(\bX_{obs}| 
\hat{\btheta})-\cP_{hbic}(\hat{\btheta})+O(1)\label{eqn:F.limit},
  \end{equation*}
  where $\cP_{hbic}(\hat{\btheta})$ is given by \refe{eqn:BIC.obs.p}.
\end{thm}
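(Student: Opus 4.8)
The plan is to work directly from the decomposition $\cF = \cF_{\cD} - \cF_{P}$ in \refe{eqn:VB.bound.1}, obtained by substituting the factorization \refe{eqn:vb.factor} into the bound \refe{eqn:VB.bound}, and to establish the large-sample limit of the two pieces separately, then recombine.

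First I would treat the penalty-like term $\cF_{P} = \sum_{i=1}^d \mathrm{KL}[q(\btheta_i)\|p(\btheta_i)]$. The key input is the result of \cite{Beal2003-phd} that the variational posterior of an exponential-family model is asymptotically Gaussian; since FA is in the exponential family and $q(\btheta)=\prod_i q(\btheta_i)$, each factor $q(\btheta_i)$ is asymptotically $\cN(\hat{\btheta}_i, H_i^{-1})$, with $H_i$ the negative Hessian of $\log q(\btheta_i)$ at $\hat{\btheta}_i$. Writing out the Gaussian entropy gives $\bbE[\log q(\btheta_i)] = \tfrac12\log|2\pi H_i(\hat{\btheta}_i)| + o(1)$. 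The crucial step — where \refobs{obs:vbem.size} does the real work — is that because the update \refe{eqn:q.thetai} of $q(\btheta_i)$ uses exactly the $N_i$ observed values of $x_i$, the Hessian $H_i$ is a sum of $N_i$ i.i.d.-type contributions, so $H_i/N_i \to H_{i0}$ for a fixed positive-definite matrix $H_{i0}$. This yields $\bbE[\log q(\btheta_i)] = \tfrac{\cD_i(k)}2\log N_i + \tfrac12\log|2\pi H_{i0}| + \cO(1) = \tfrac{\cD_i(k)}2\log N_i + \cO(1)$, i.e. \refe{eqn:q.ta.lmt.2}. Combining with $\bbE[\log p(\btheta_i)] = \log p(\hat{\btheta}_i) + \cO(1)$, which holds because the prior is continuous and $q(\btheta_i)$ concentrates at $\hat{\btheta}_i$, and summing over $i$, gives \refe{eqn:FP.lmt}.

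Next, for the data term $\cF_{\cD} = \bbE[\log\{p(\bX_{obs},\bZ|\btheta)/q(\bZ)\}]_{q(\bZ)q(\btheta)}$, I would argue that asymptotic normality of $q(\btheta_i)$ with $\cO(N_i^{-1})$ variance forces $q(\btheta_i)$ to concentrate on $\hat{\btheta}_i$, so $q(\btheta_i)=\delta(\btheta_i-\hat{\btheta}_i)+\cO(1)$; feeding this into the VBE update \refe{eqn:vb.q.z} collapses $q(\bz_n)$ onto $p(\bz_n|\bx_n^o,\hat{\btheta})+\cO(1)$. Substituting both limits into $\cF_{\cD}$ and using $\int p(\bz_n|\bx_n^o,\hat{\btheta})\log\{p(\bx_n^o,\bz_n|\hat{\btheta})/p(\bz_n|\bx_n^o,\hat{\btheta})\}\,d\bz_n = \log p(\bx_n^o|\hat{\btheta})$ gives $\cF_{\cD} = \cL_o(\bX_{obs}|\hat{\btheta}) + \cO(1)$, which is \refe{eqn:FD.lmt}. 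Finally, substituting \refe{eqn:FD.lmt} and \refe{eqn:FP.lmt} into \refe{eqn:VB.bound.1} and absorbing the order-one quantity $\sum_{i=1}^d\log p(\hat{\btheta}_i)$ into the $O(1)$ remainder recovers the claimed identity, with the penalty equal to $\sum_{i=1}^d\tfrac{\cD_i(k)}2\log N_i = \cP_{hbic}(\hat{\btheta})$ as in \refe{eqn:BIC.obs.p}.

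I expect the main obstacle to be making the two ``strongly peaked / Dirac-delta'' arguments precise and uniform: one must verify that replacing $q(\btheta_i)$ by $\delta(\btheta_i-\hat{\btheta}_i)$ in both $\cF_P$ and $\cF_{\cD}$ costs only $\cO(1)$, which requires (a) controlling the Gaussian-approximation error of $q(\btheta_i)$ uniformly, (b) checking that $H_i/N_i$ genuinely converges to a fixed nonsingular $H_{i0}$ — this is where one needs $N_i\to\infty$ for every $i$, i.e. $N_i/N$ bounded away from $0$, which the MAR assumption supplies — and (c) ensuring the expectation over $q(\bZ)$ in $\cF_{\cD}$ passes to the limit, via dominated convergence applied to the quadratic-in-$\bz_n$ integrand. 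The bookkeeping identity $\sum_{i=1}^d\cD_i(k) = d(k+2)-k(k-1)/2 = \cD(k)$, which shows that $\cP_{hbic}$ is the hierarchical analogue of the usual BIC penalty, is routine and I would relegate it to a one-line remark.
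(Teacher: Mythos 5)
Your proposal follows essentially the same route as the paper's own argument: the same decomposition $\cF=\cF_{\cD}-\cF_{P}$, the same Gaussian approximation of $q(\btheta_i)$ with $H_i/N_i\to H_{i0}$ to extract the $\tfrac{\cD_i(k)}{2}\log N_i$ terms via \refobs{obs:vbem.size}, and the same Dirac-delta collapse of $q(\btheta_i)$ and $q(\bz_n)$ to obtain $\cF_{\cD}=\cL_o(\bX_{obs}|\hat{\btheta})+\cO(1)$. Your added remarks (the explicit integral identity recovering $\log p(\bx_n^o|\hat{\btheta})$, and the need for each $N_i\to\infty$, which is really supplied by the fixed-missing-rate assumption of \refthm{thm:rlt} rather than by MAR) are refinements of, not departures from, the paper's proof.
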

With \refthm{thm:F.limit}, we obtain the HBIC criterion as detailed in \refs{sec:bico1}.

\subsection{Relationship with BIC and HBIC}\label{sec:rlt.bic}
For incomplete data, it is easy to see from \refe{eqn:BIC} and \refe{eqn:BIC.obs.p} that BIC penalizes the model parameter $\btheta_i$ using the whole sample size $N$ while HBIC using the actual observed sample sizes $N_i$'s only. In fact, since $N_i\leq N$, HBIC penalizes model lighter than BIC. Despite the difference, there exists close relationship between HBIC and BIC. For complete data, $N_i=N,i=1,\dots,d$, HBIC degenerates into BIC. Let $\gamma_i$ stands for the missing rate of variable $i$. Under fixed missing rates for incomplete data, we have the following \refthm{thm:rlt}.
%
\begin{thm}\label{thm:rlt}
	Under the assumption that the missing rates $\gamma_i$'s are constants, as the sample size $N\rightarrow\infty$, HBIC is consistent since BIC is a further
	approximation of HBIC.
\end{thm}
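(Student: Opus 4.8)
The plan is to turn the informal phrase ``BIC is a further approximation of HBIC'' into a precise statement --- that the two scores differ by a bounded, sample-size-free quantity --- and then to inherit the established consistency of BIC for FA model selection \citep{Schwarz1978-BIC,Shao1997-BIC,Song2008-misfa}. The fixed-missing-rate assumption enters through $N_i=(1-\gamma_i)N+\cO(1)$, so that
\begin{equation*}
\log N_i=\log N+\log(1-\gamma_i)+o(1),\qquad i=1,\dots,d .
\end{equation*}

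First I would substitute this expansion into the HBIC penalty \refe{eqn:BIC.obs.p}. Using the identity $\sum_{i=1}^d\cD_i(k)=\cD(k)$ recorded just after \refe{eqn:q.ta.lmt.2}, this gives
\begin{equation*}
\cP_{hbic}(\hat{\btheta}(k))=\sum\nolimits_{i=1}^d\frac{\cD_i(k)}2\log N_i=\frac{\cD(k)}2\log N+b(k)+o(1)=\cP_{bic}(\hat{\btheta}(k))+b(k)+o(1),
\end{equation*}
where $b(k)=\sum_{i=1}^d\frac{\cD_i(k)}2\log(1-\gamma_i)$ does not depend on $N$. Hence the HBIC score \refe{eqn:incomp.hbic} and the BIC score \refe{eqn:incomp.bic} obey $\cL_2^*(k,\hat{\btheta}(k))=\cL^*(k,\hat{\btheta}(k))-b(k)+o(1)$, with $b(k)$ bounded uniformly over the finite candidate set $k\in[k_{min},k_{max}]$ (for instance by $\frac12 d\,k_{max}\max_i|\log(1-\gamma_i)|$). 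This is exactly the sense in which BIC is obtained from HBIC by dropping an order-$1$ term, paralleling \refthm{thm:F.limit}; and when the data are complete, $N_i=N$ forces $b(k)=0$ and HBIC coincides with BIC, as already noted.

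Next I would argue that an $\cO(1)$ perturbation of the penalty cannot change the selected model in the limit. Let $k_0$ be the true number of factors. For any competitor $k\neq k_0$ in the candidate range,
\begin{equation*}
\cL_2^*(k_0,\hat{\btheta}(k_0))-\cL_2^*(k,\hat{\btheta}(k))=\big(\cL^*(k_0,\hat{\btheta}(k_0))-\cL^*(k,\hat{\btheta}(k))\big)-\big(b(k_0)-b(k)\big)+o(1),
\end{equation*}
and $|b(k_0)-b(k)|$ is a fixed constant. By the consistency of BIC under MAR incomplete data, the bracketed BIC gap diverges to $+\infty$: it is of order $\log N$ in probability in the over-fitting case $k>k_0$ (the extra observed-data log-likelihood being bounded in probability while the penalty gap grows like $\log N$), and of order $N$ in probability in the under-fitting case $k<k_0$ (the log-likelihood gap itself being linear in $N$). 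In either case it dominates the bounded shift $b(k_0)-b(k)$, so the displayed difference is positive with probability tending to one; a union bound over the finitely many $k\in[k_{min},k_{max}]\setminus\{k_0\}$ gives $\Pr(\hat k=k_0)\to1$, which is the asserted consistency.

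The main obstacle is not the outline but the quantitative content of this middle step: one has to know that, already for \emph{BIC}, the margin separating the true model from every competitor grows without bound, and in particular that in the over-fitting regime the observed-data log-likelihood increment is bounded in probability (or at worst $o_p(\log N)$). This is the standard likelihood-ratio behaviour for nested regular models, and under MAR it persists because the observed Fisher information is $\Theta(N)$ once the missing rates $\gamma_i$ are held fixed; I would invoke the existing BIC-consistency results for this rather than re-derive them, and then simply observe that the additive constant $b(k)$, being $\cO(1)$, is asymptotically negligible relative to the $\log N$-scale penalty, which closes the argument.
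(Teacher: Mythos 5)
Your proposal follows essentially the same route as the paper's own proof: substitute $N_i=(1-\gamma_i)N$ into the HBIC penalty \refe{eqn:BIC.obs.p}, split $\log N_i$ into $\log N$ plus the $N$-free constant $\log(1-\gamma_i)$, and identify the resulting order-1 term (your $b(k)=\sum_{i=1}^d\tfrac{\cD_i(k)}{2}\log(1-\gamma_i)$, which is in fact the correctly weighted version of the expression the paper writes down) as the only discrepancy between the two penalties, whence BIC is a further approximation of HBIC. The paper stops at that point and simply asserts that consistency is inherited; your middle step --- showing that a bounded, $N$-free penalty shift cannot change the selected model because the BIC gap between the true $k_0$ and any competitor diverges at rate $\log N$ (over-fitting) or $N$ (under-fitting), followed by a union bound over the finite candidate range --- is precisely the content the paper leaves implicit, and it is correct under the standard regularity and MAR conditions you cite. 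So the decomposition is identical; what your write-up buys is an explicit justification of the "hence consistent" conclusion rather than a new method.
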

\begin{proof}
Since $N_i=N(1-\gamma_i)$, substitute this equality into \refe{eqn:BIC.obs.p} and dropping the order-1 term $\sum\nolimits_{i=1}^ki\log{(1-\gamma_i)}+\sum\nolimits_{i=k+1}^dk\log{(1-\gamma_i)}/2$ that does not increase with $N$, we obtain that BIC emerges as a further
	approximation of HBIC as $N\rightarrow\infty$. This complete the proof.
\end{proof}
\refthm{thm:rlt} shows that HBIC shares the theoretical consistency of BIC. In addition, it should be noted that, under the simple model in \refs{sec:mot}, HBIC \refe{eqn:BIC.obs.p} degenerates to Approximation 2 of BIC. 

\section{Experiments}\label{sec:expr}
The theoretical analysis in \refs{sec:bic.obs} shows that BIC is a further
approximation of HBIC in large sample limit and HBIC penalizes model lighter than BIC. However, this analysis fails to tell us whether HBIC could be more advantageous than BIC. We hence empirically compare HBIC and BIC with synthetic and real-world data sets. For a fair comparison, we use the two-stage procedure described in \refs{sec:bicobs}. The ECME algorithm described in \refs{sec:ifa.ecm} is used to obtain the ML estimate. After using mean imputation to fill the missing values \citep{little_samd}, we initialize the parameters with the corresponding PCA starting value from the sample covariance matrix. In addition, we set $\eta=0.005$ and stop the algorithm when the relative change in the actual data log likelihood is smaller than a threshold $tol=10^{-8}$ or the number of iterations exceeds a maximum $500$.


\subsection{Synthetic data}\label{sec:simu}
In this section, we use synthetic data to compare the performance of BIC, HBIC, AIC and CAIC. Let
\begin{equation}\label{eqn:simu.para}
	\begin{split}
\bmu&=\bo_d,\,\bPsi=\diag\{\mbox{linspace}(0.9,1,d)\},\\
\bA'&=\begin{pmatrix}
	0.8&0.6&0&0&0&0&0&0&0&0\\
	0&0.1&0.7&-0.7&0.8&0&0.2&-0.1&0.1&-0.1\\
	0&0&-0.1&0.1&0.1&0.9&0.95&-0.95&-0.8&-0.95
\end{pmatrix}.
\end{split}
\end{equation}
To reduce variability, we generate 100 training data sets and report the successful, underestimating, and overestimating rates in choosing the number of factors.

\subsubsection{Low-dimensional data}\label{sec:expr.low}
The objective of this experiment is to compare the finite sample performance of BIC and HBIC on a low-dimensional data with various missing rates. To this end, we generate a $d=10$-dimensional dataset with sample size $N=250$ from a $k=3$-factor model \refe{eqn:fa} parameterized by $(\bmu_1,\bA_1,\bPsi_1)$ as
\begin{equation*}
	\bmu_1=\bmu,\,\bPsi_1=0.1\cdot\bPsi, \bA_1=\bA,
\end{equation*}
where $\bmu,\bPsi,\bA$ are given by \refe{eqn:simu.para}. \reff{fig:low}~(a) shows a typical scree plot obtained from one complete dataset. It can be seen that both the Kaiser's rule and scree plot suggest choosing three factors, which is the true number. 

To investigate the performance in incomplete data cases with various missing rates. We create the incomplete data set by deleting randomly the observed values of each variable $x_i$ according to the missing rate vector $\bgamma_1$. Specifically, we set  $\bgamma_1=(0.6m\cdot \bl_2;\,\, 0.7m\cdot\bl_3;\,\,0.1\cdot\bl_5)$, where $m=0,0.95,1,1.05$. The value of $m$ affects the missing rate. The higher the value of $m$, the higher the missing rate. The case of $m=0$ is close the complete data case. \reff{fig:low}~(b)--(e) show typical evolvements of criterion values versus number of factors by the four criteria with various values of missing rates. \reft{tab:low} summarizes the results of underestimating ($U$), successful ($S$), and overestimating ($O$) rates over 100 replications. The main observations includes
\begin{enumerate}[(i)]
	\item When missing rate is small, e.g., $m=0$, all of the BIC, CAIC and HBIC pick the correct number of factors while AIC suffers from overestimation. 
	\item When missing rate gets larger, i.e., $m$ from 0.95 to 1.05, HBIC performs more accurately than BIC, while CAIC suffers from the most serious underestimation.
\end{enumerate}
\begin{figure}[tbh]
	\centering 
	\subfigure{
		\label{fig:low-1}\scalebox{0.5}[0.5]{\includegraphics*{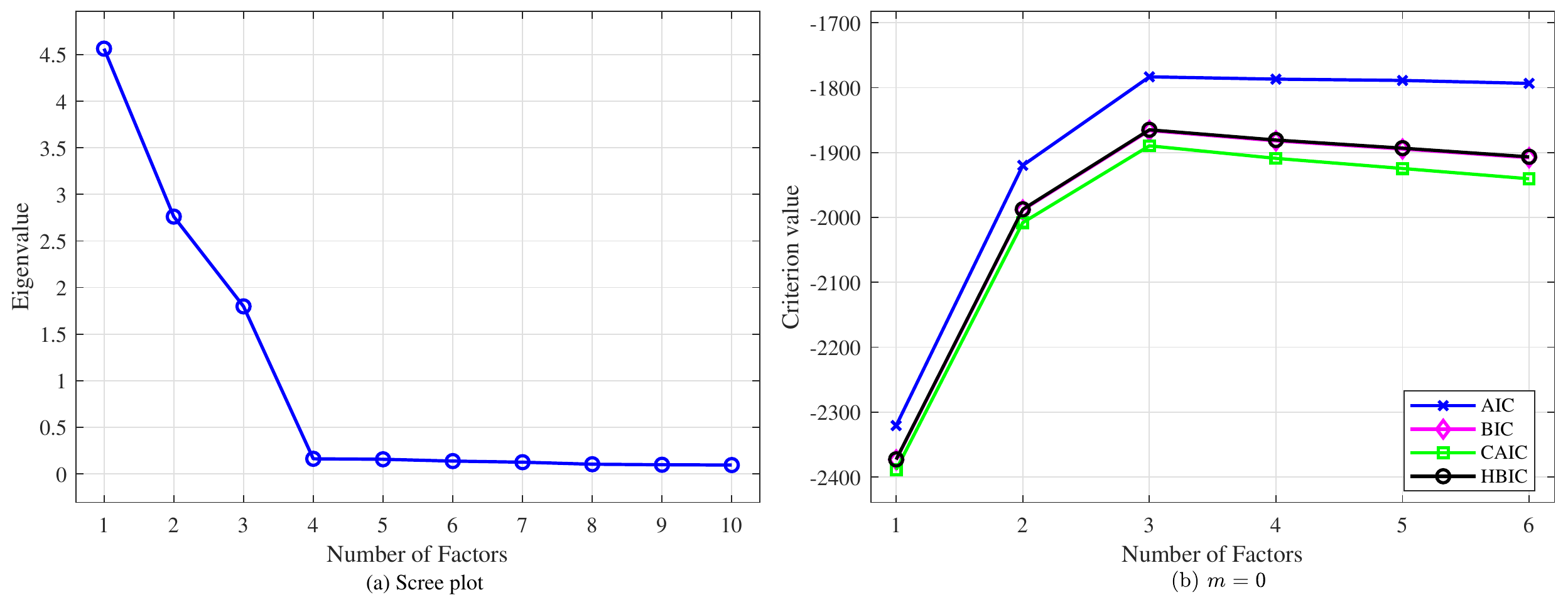}}}
	\subfigure{
		\label{fig:low-2}\scalebox{0.5}[0.5]{\includegraphics*{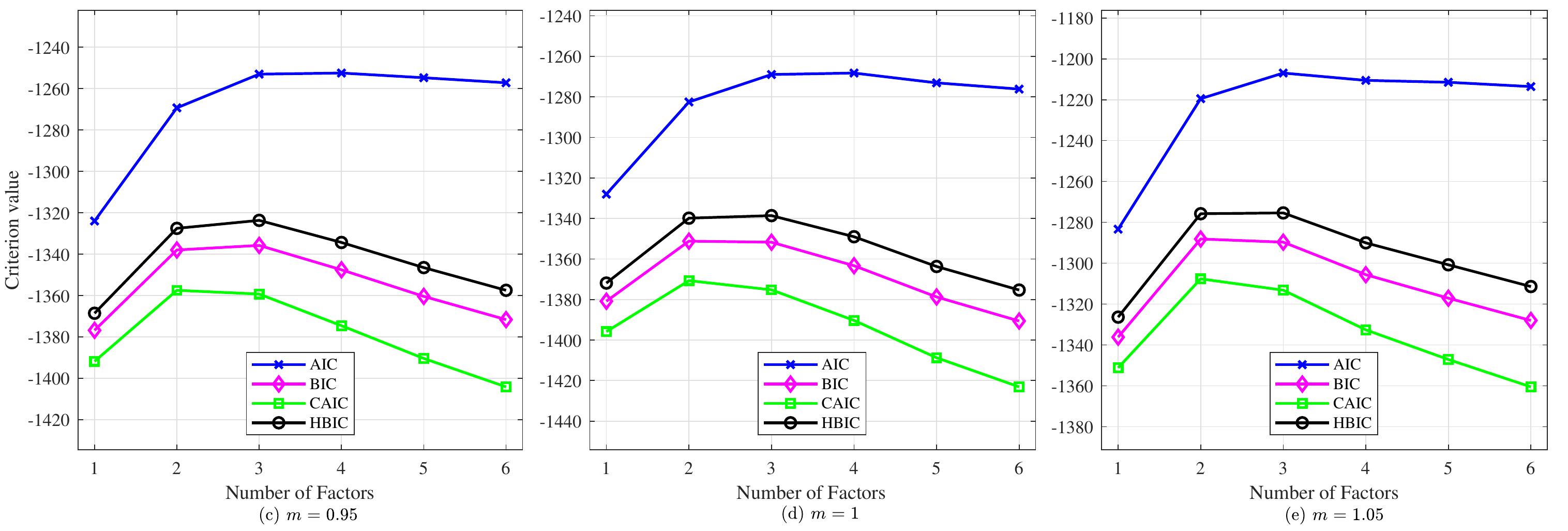}}}
	\caption{Typical evolvements of criterion values versus number of factors by different criteria with various missing rates on a low-dimensional dataset.} 
	\label{fig:low}
\end{figure}
\begin{table}[tbh]
	\centering
	\caption{Results of AIC, BIC, CAIC and HBIC on low-dimensional data sets with various missing rates over 100 replications: Rates of underestimation ($U$), success ($S$), and overestimation ($O$).}
	\label{tab:low}
	\begin{tabular}{ccccccccccccc}
		\toprule 
		\multirow{2}{*}{Criterion} & \multicolumn{3}{c}{$m=$ 0} & \multicolumn{3}{c}{$m=$ 0.95} & \multicolumn{3}{c}{$m=$ 1} & \multicolumn{3}{c}{$m=$ 1.05} \\
		\cmidrule(l){2 - 4}  \cmidrule(l){ 5 - 7 }  \cmidrule(l){ 8 - 10 }  \cmidrule(l){ 11 - 13 } 
		~ &{$U$} & \multicolumn{1}{c}{$S$} & \multicolumn{1}{c}{$O$} & \multicolumn{1}{c}{$U$} & \multicolumn{1}{c}{$S$} & \multicolumn{1}{c}{$O$} & \multicolumn{1}{c}{$U$} & \multicolumn{1}{c}{$S$} & \multicolumn{1}{c}{$O$} & \multicolumn{1}{c}{$U$} & \multicolumn{1}{c}{$S$} & \multicolumn{1}{c}{$O$} \\
		\midrule 
		AIC & 0 & 90 & 10 & 0 & 82 & 18 & 0 & 79 & 21 & 0 & 82 & 18 \\
		BIC & 0 & 100 & 0 & 4 & 96 & 0 & 14 & 86 & 0 & 29 & 71 & 0 \\
		CAIC & 0 & 100 & 0 & 19 & 81 & 0 & 29 & 71 & 0 & 53 & 47 & 0 \\
		HBIC & 0 & 100 & 0 & 2 & 98 & 0 & 7 & 93 & 0 & 19 & 81 & 0 \\
		\bottomrule
	\end{tabular}
\end{table}

\subsubsection{High-dimensional data}\label{sec:expr.high}
In this experiment, we further compare the four criteria using a dataset with higher dimensionality. We generate a $d=40$-dimensional dataset with sample size $N=400$ from a $k=6$-factor model \refe{eqn:fa} parameterized by $(\bmu_2,\bA_2,\bPsi_2)$ as
\begin{equation*}
	\bmu_2=\bmu,\,\bPsi_2=0.2\cdot\bPsi,\, \bA_1=(\bA;\bA);\, \bA_2=\mbox{blkdiag}(\bA_1,\bA_1);
\end{equation*}
where $\bmu,\bPsi,\bA$ are given by \refe{eqn:simu.para}. \reff{fig:high}~(a) shows a typical scree plot obtained from one complete dataset. It can be seen that both the Kaiser's rule and scree plot suggest choosing six factors, which is the true number. 

To investigate the performance in incomplete data cases with various missing rates. 
We set $\bgamma_2=(\bgamma_1;\bgamma_1;\bgamma_1;\bgamma_1)$, where $m=0,0.95,1,1.05$. \reff{fig:high}~(b)--(e) show typical evolvements of criterion values versus number of factors by the four criteria with various values of missing rates. \reft{tab:high} summarizes the results of underestimating ($U$), successful ($S$), and overestimating ($O$) rates over 100 replications. It can be seen that the observations are generally consistent with those in \refs{sec:expr.low}.
\begin{figure}[tbh]
	\centering 
	\subfigure{
		\label{fig:high-1}\scalebox{0.5}[0.5]{\includegraphics*{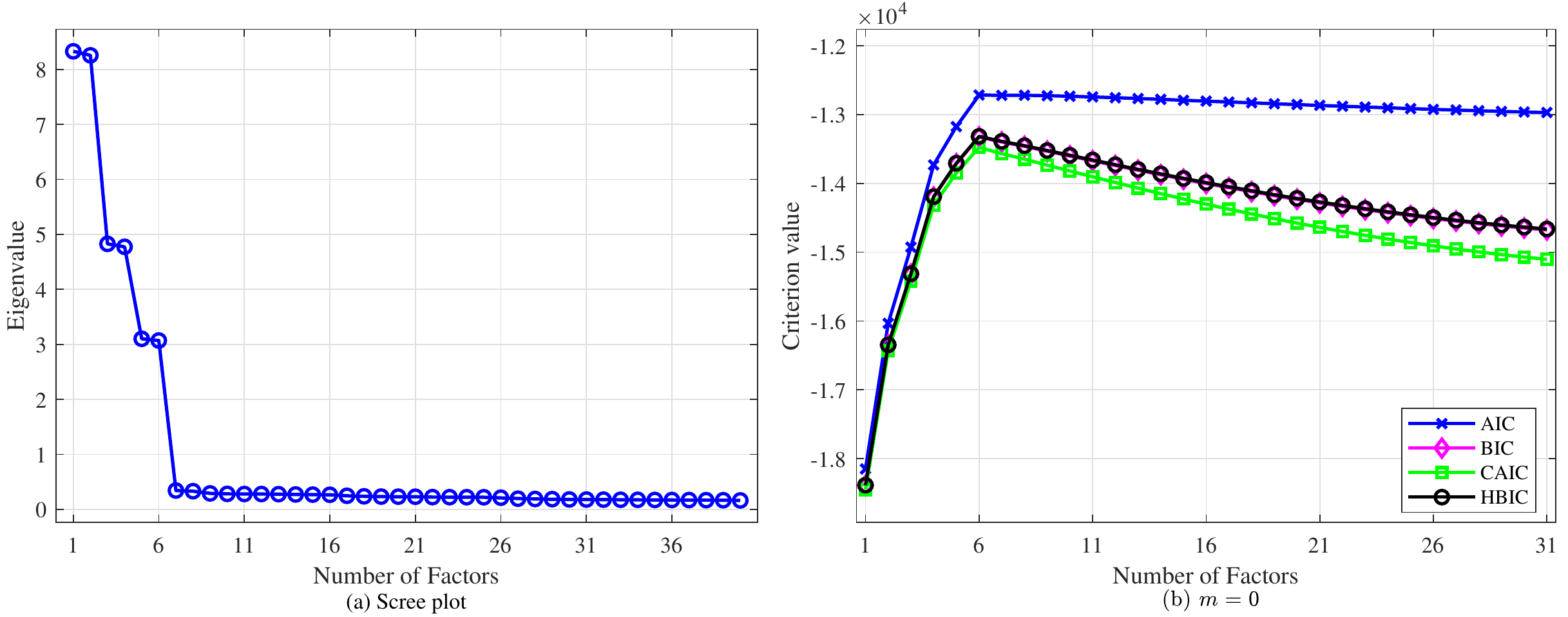}}}
	\subfigure{
		\label{fig:high-2}\scalebox{0.5}[0.5]{\includegraphics*{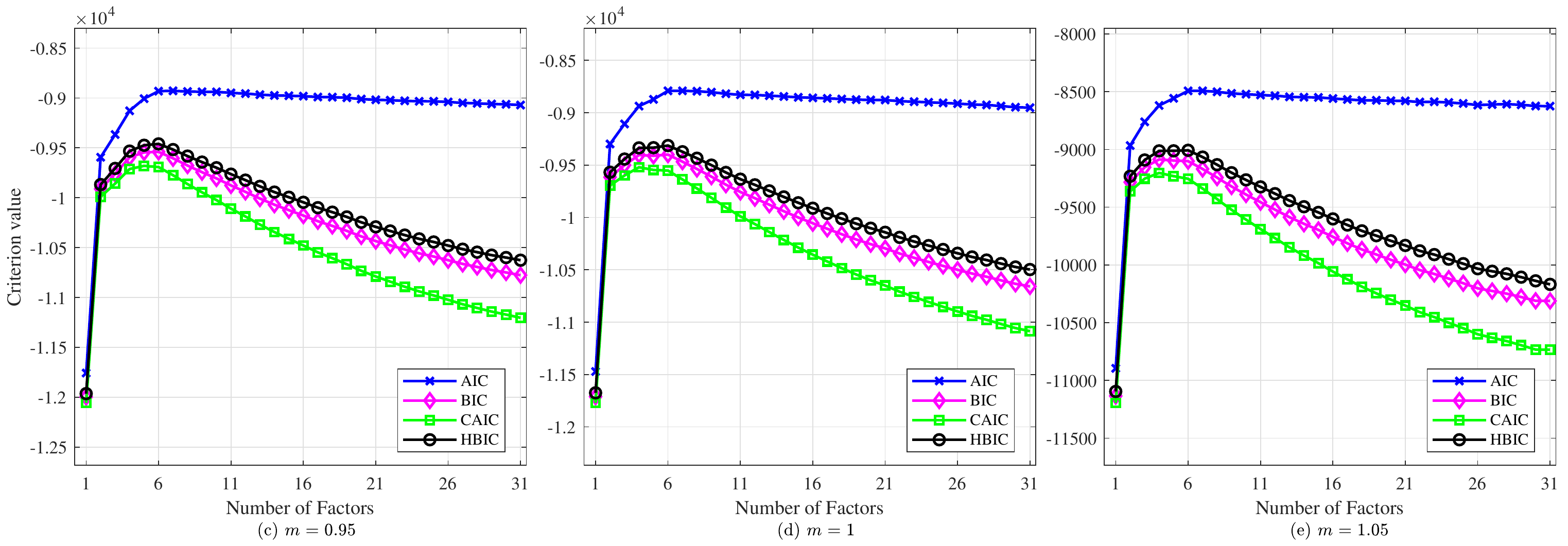}}}
	\caption{Typical evolvements of criterion values versus number of factors by different criteria with various missing rates on a high-dimensional dataset.} 
	\label{fig:high}
\end{figure}
\begin{table}[tbh]
	\centering
	\caption{Results of the four criteria on high-dimensional datasets with various missing rates over 100 replications: Rates of underestimation ($U$), success ($S$), and overestimation ($O$).}
	\label{tab:high}
	\begin{tabular}{ccccccccccccc}
		\toprule 
		\multirow{2}{*}{Criterion} & \multicolumn{3}{c}{$m=$ 0} & \multicolumn{3}{c}{$m=$ 0.95} & \multicolumn{3}{c}{$m=$ 1} & \multicolumn{3}{c}{$m=$ 1.05} \\
		\cmidrule(l){2 - 4} \cmidrule(l){5 - 7} \cmidrule(l){8 - 10}  \cmidrule(l){11 - 13} 
		~ &\multicolumn{1}{c}{$U$} & \multicolumn{1}{c}{$S$} & \multicolumn{1}{c}{$O$} & \multicolumn{1}{c}{$U$} & \multicolumn{1}{c}{$S$} & \multicolumn{1}{c}{$O$} & \multicolumn{1}{c}{$U$} & \multicolumn{1}{c}{$S$} & \multicolumn{1}{c}{$O$} & \multicolumn{1}{c}{$U$} & \multicolumn{1}{c}{$S$} & \multicolumn{1}{c}{$O$} \\
		\midrule 
		AIC & 0 & 81 & 19 & 0 & 70 & 30 & 0 & 62 & 38 & 0 & 56 & 44 \\
		BIC & 0 & 100 & 0 & 0 & 100 & 0 & 5 & 95 & 0 & 33 & 67 & 0 \\
		CAIC & 0 & 100 & 0 & 12 & 88 & 0 & 43 & 57 & 0 & 79 & 21 & 0 \\
		HBIC & 0 & 100 & 0 & 0 & 100 & 0 & 2 & 98 & 0 & 11 & 89 & 0 \\
		\bottomrule
	\end{tabular}
\end{table}

\subsection{Real data}\label{sec:real}

In this experiment, we use the cereal dataset \citep{Lattin2003-mult} to further compare the performance of BIC and HBIC. The dataset is obtained from a survey by 116 cereal consumers on 12 popular cereal brands. It comprises 235 observations, evaluated on 25 variables. \cite{Lattin2003-mult,zhao2014-fa-auto} have analyzed this dataset and found that both BIC and AIC choose a 4-factor model for the complete dataset or incomplete dataset with 5\% missing. 

In this experiment, we are interested in comparing BIC and HBIC when the data suffers from a higher missing rate. To this end, we use a subset consisting of ten variables: Filling, Natural, Fibre, Health, Nutritious, Sweet, Salt, Sugar, Kids, and Family. The sample size $N$ is still 235. \reff{fig:real}~(a) shows the scree plot obtained from the complete dataset. For this 10-variable subset, it can be seen from \reff{fig:real}~(a) that both the Kaiser's rule and scree plot suggest choosing three factors. 

To investigate the performance in incomplete data cases with various missing rates. We create the incomplete data set by randomly deleting the observed values of each variable $x_i$ according to the missing rate vector $\bgamma_3$. We perform 100 replications, and hence we have 100 incomplete datasets for each missing rate $\bgamma_3$ under consideration. Specifically, we set $\bgamma_3=(0.4\cdot\bl_5;\,\, 0.5m\cdot\bl_5)$, where $m=0,0.9,1,1.1$. \reff{fig:real}~(b)--(e) show typical evolvements of criterion values versus number of factors by the four criteria with various values of missing rates. The detailed results over 100 replications are summarized in \reft{tab:cereal2}, including the rates of underestimating ($U$), successful ($S$), and overestimating ($O$). It can be seen from \reft{tab:cereal2} that  
\begin{enumerate}[(i)]
	\item when $m=0$, AIC suffers from overestimation while all of the BIC, CAIC, and HBIC pick the model with $k=3$, which is as expected since this case is closer the complete data case.
	\item When missing rate gets larger, i.e., $m$ from 0.9 to 1.1, HBIC performs more accurately than BIC, while CAIC suffers from the most serious underestimation. 
\end{enumerate}
These observations are generally consistent with those in \refs{sec:simu}.
\begin{figure}[tbh]
	\centering 
	\subfigure{
		\label{fig:real-1}\scalebox{0.5}[0.5]{\includegraphics*{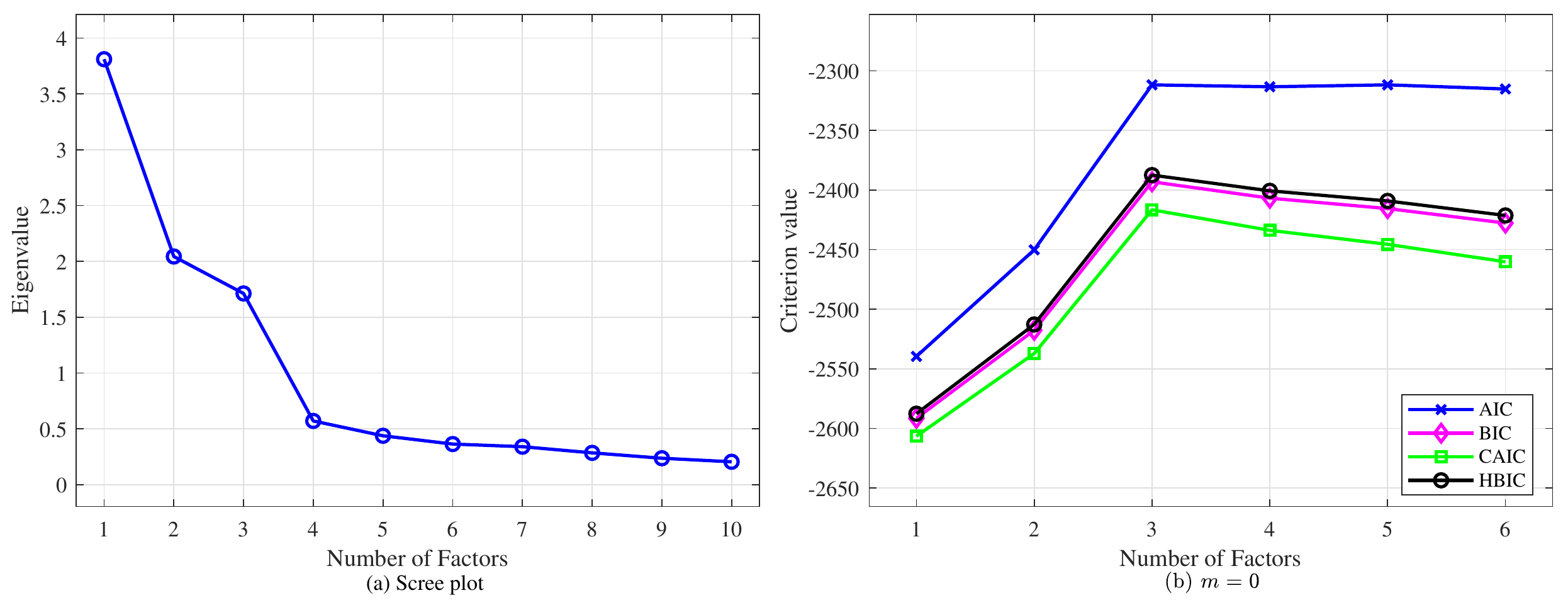}}}
	\subfigure{
		\label{fig:real-2}\scalebox{0.5}[0.5]{\includegraphics*{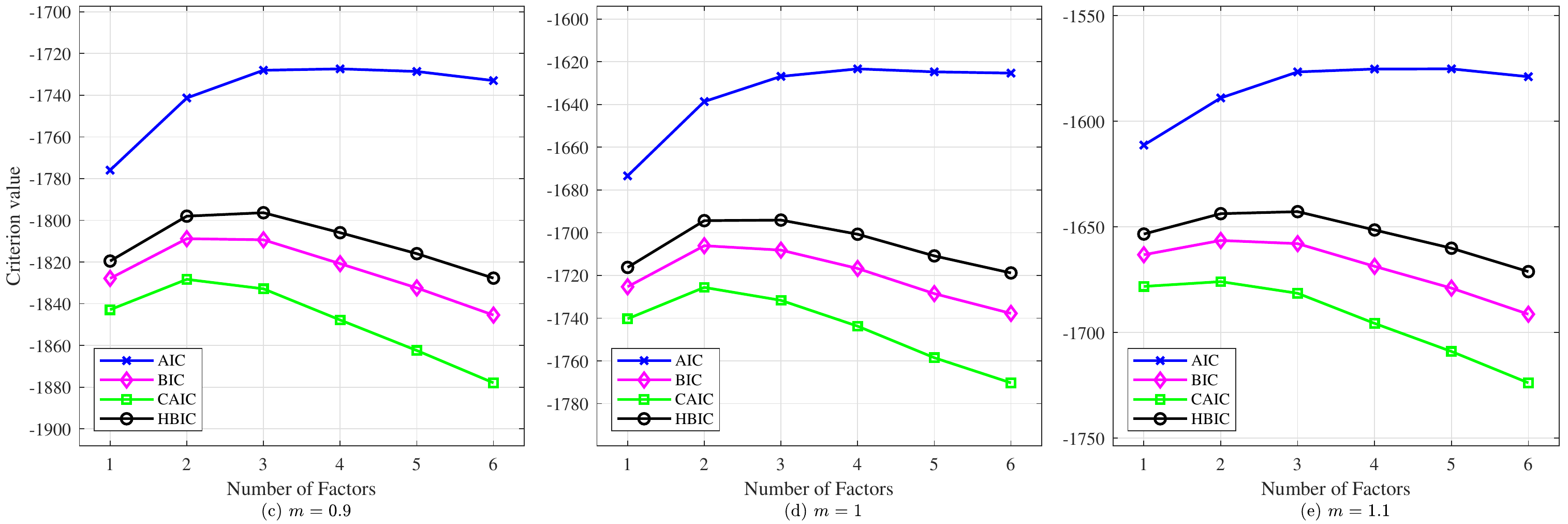}}}
	\caption{Typical evolvements of criterion values versus number of factors by different criteria with various missing rates on cereal data.}		 
	\label{fig:real}
\end{figure}
\begin{table}[tbh]
	\centering
	\caption{Results of AIC, BIC, CAIC and HBIC on cereals data with various missing rates over 100 replications: Rates of underestimation ($U$), success ($S$), and overestimation ($O$).}
	\label{tab:cereal2}
	\begin{tabular}{ccccccccccccc}
		\hline \multirow{2}{*}{ Criterion } & \multicolumn{3}{c}{$m=0$} & \multicolumn{3}{c}{$m=0.9$} & \multicolumn{3}{c}{$m=1$} & \multicolumn{3}{c}{$m=1.1$} \\
		\cline { 2 - 4 }  \cline { 5 - 7 }  \cline { 8 - 10 }  \cline { 11 - 13 } 
		 ~ &\multicolumn{1}{c}{$U$} & \multicolumn{1}{c}{$S$} & \multicolumn{1}{c}{$O$} & \multicolumn{1}{c}{$U$} & \multicolumn{1}{c}{$S$} & \multicolumn{1}{c}{$O$} & \multicolumn{1}{c}{$U$} & \multicolumn{1}{c}{$S$} & \multicolumn{1}{c}{$O$} & \multicolumn{1}{c}{$U$} & \multicolumn{1}{c}{$S$} & \multicolumn{1}{c}{$O$} \\
		\hline 
		AIC & 0 & 69 & 31 & 0 & 72 & 28 & 0 & 73 & 27 & 0 & 65 & 35 \\
		BIC & 0 & 100 & 0 & 3 & 97 & 0 & 14 & 86 & 0 & 31 & 69 & 0 \\
		CAIC & 0 & 100 & 0 & 7 & 93 & 0 & 29 & 71 & 0 & 52 & 48 & 0 \\
		HBIC & 0 & 100 & 0 & 0 & 100 & 0 & 7 & 93 & 0 & 20 & 80 & 0 \\
		\hline
	\end{tabular}
\end{table}

\section{Concluding remarks}\label{sec:cons}
We have developed a new criterion called hierarchical BIC (HBIC) for model selection in factor analysis (FA) model with incomplete data. Unlike BIC, which penalizes the model complexity using `complete' sample size, HBIC does this only using the actual amounts of observed information $N_i$'s. Experiments with incomplete synthetic and real data show that the proposed HBIC can be more accurate than BIC, particularly when the missing rate is not small. Therefore, HBIC is a better criterion than BIC for incomplete data.

For future work, it would be interesting to investigate how to extend the proposed HBIC to the FA-related models in the presence of incomplete data such as mixtures of factor analyzers (MFA) \citep{wang2020automated}, and mixtures of common FA (MCFA) \citep{Baek2010-mcfa,Wang2013-mcfa} and etc.

\appendix
\section{The ECME algorithm}\label{sec:app.ecme}
Recall that $O$ denotes the observed indexes of $\bx$. $\bmu^o$ and $\bmu^m$ denote the corresponding subvectors of $\bmu$. $\bSig^{oo}$, $\bSig^{mo}={\bSig^{om}}'$ and $\bSig^{mm}$ denote the corresponding submatrices of $\bSig$. By the well known normal result, 
\begin{eqnarray*}
	\bx^o&\sim&\cN(\bmu^o,\bSig^{oo}),\label{eqn:xo.dist}\quad \hbox{and}\\
	\bx^m|\bx^o&\sim&\cN(\bmu^{m\cdot o},\bSig^{mm\cdot o}),\label{eqn:xm|o.dist}
\end{eqnarray*}
where
\begin{eqnarray*}
	\bmu^{m\cdot o}&=&\bbE(\bx^m|\bx^o)=\bmu^m-\bSig^{mo}{\bSig^{oo}}^{-1}(\bx^o-\bmu^o),\label{eqn:mu.m|o}\quad \hbox{and}\\
	\bSig^{mm\cdot o}&=&\cov(\bx^m|\bx^o)=\bSig^{mm}-\bSig^{mo}{\bSig^{oo}}^{-1}\bSig^{om}.\label{eqn:Sig.m|o}
\end{eqnarray*}
The observed log likelihood of $\btheta=(\bmu, \bA, \bPsi)$ for incomplete data $\bX_{obs}=\{\bx_n^o\}_{n=1}^N$ is given by \refe{eqn:mislike}. The ECME algorithm that maximizes $\cL_o$ in \refe{eqn:mislike} consists of the following three CM-steps.


\begin{itemize}
	\item \textbf{CML-Step 1:} Given $(\bA,\bPsi)$, maximizing $\cL_o$ in \refe{eqn:mislike} w.r.t. $\bmu$ yields $\tbmu$.
\end{itemize}
Let $\bW_n$ and $\bT_n$ stand for $d\times d$ matrices of zeros except that
$\bW_n^{oo}$ equals to ${\bSig_n^{oo}}^{-1}$ and $\bT_n^{mm}$ equals to $\bSig_n^{mm\cdot o}$. Denote $\hat{\bx}_n=(\hat{\bx}_n^o,\hat{\bx}_n^o)$, where $\hat{\bx}_n^o=\bx_n^o$ and $\hat{\bx}_n^m=\bmu_n^{m\cdot o}$. Following \citep{chuanhai_mlfa}, CML-Step 1 yields
\begin{equation*}
	\tbmu=\left(\sum\nolimits^N_{n=1}\bW_n\right)^{-1}\left(\sum\nolimits^N_{n=1}
	\bW_n\,\bx_n\right) \label{eqn:mis.mean}.
\end{equation*}
\begin{itemize}
	\item \textbf{E-step:}  Given $\bX_{obs}$ and $(\tbmu, \bA, \bPsi$), compute the expected $\cL_1$ to obtain $Q_1$ function.
\end{itemize}
$\bX=(\bX_{obs}, \bX_{mis})$ is the complete data, where $\bX_{mis}=\{\bx_n^m\}_{n=1}^N$. The complete data log likelihood of $\btheta$ for complete data $\bX$ is given by $\cL_1$ in \refe{eqn:comlike1}. The $Q$ function in the E-step is computed as follows
\begin{equation}
	Q_1(\btheta)=\bbE\left(\cL_1(\btheta)|\bX_{obs},
	\tbmu,\bA,\bPsi\right)=-\frac{N}{2}\{\log{|\bSig|}+\mbox{tr}(\bSig^{-1}\bS)\},\label{eqn:mis.Q1}
\end{equation}
where
\begin{equation}
	\bS=\frac1N\sum\nolimits_{n=1}^N\bbE\left( (\bx_n-\tbmu)(\bx_n-\tbmu)'|\bX_{obs},\tbmu,\bA,\bPsi\right)=\frac1N\sum\nolimits_{n=1}^N\left[(\hat{\bx}_n-\tbmu)(\hat{\bx}_n-\tbmu)'+\bT_n\right].\label{eqn:S}
\end{equation}
Let the normalized sample covariance matrix be
\begin{equation}\label{eqn:cov.norm}
	\bar{\bS}=\bPsi^{-1/2}\bS\bPsi^{-1/2},
\end{equation}
where $\bS$ is given in \refe{eqn:S}, and
$(\lambda_i, \mathbf{u}_i)$ be its eigenvalue-eigenvector pairs
of $\bar{\bS}$ sorted in the order $\lambda_1\geq \lambda_2\geq \dots \geq
\lambda_d$. 
\begin{itemize}
	\item \textbf{CMQ-Step 2:} Given $(\tbmu, \bPsi$), maximizing $Q_1$ in \refe{eqn:mis.Q1} w.r.t. $\bA$ yields $\tbA$.
\end{itemize}
Given $\bPsi$, $\tbA$ is obtained by
\begin{equation*}
	\tbA=\bPsi^{1/2}\bU_{k'}\left(\bLmd_{k'}-\bI \right)^{1/2}\bR,
	\label{eqn:A}
\end{equation*}
where, if $\lambda_k>1$, $k'=k$; otherwise, $k'$ is the
unique integer satisfying $\lambda_{k'}>1\geq\lambda_{k'+1}$,
$\bLmd_{k'}=\mbox{diag}(\lambda_1, \lambda_2, \dots,
\lambda_{k'})$, $\bU_{k'}=(\bu_1, \bu_2, \dots, \bu_{k'})$ and $\bR$ is an orthogonal matrix satisfying $\bR\bR'=\bI$.

\begin{itemize}
	\item \textbf{CMQ-Step 3:} Let $
	\bPsi_i\triangleq\mathrm{diag}(\tpsi_1,
	\dots,\tpsi_{i-1},
	\psi_i,\psi_{i+1},\dots,\psi_d)$. Given $(\tbmu, \tbA, \bPsi_i$), maximizing $Q_1$ w.r.t. $\psi_i$ yields $\tpsi_i$, sequentially for $i=1,2,\dots,d$.
\end{itemize}
By FA model assumption that $\bPsi$ is positive, we can pick an arbitrary very
small number $\eta>0$ and assume $\tpsi_i\geq\eta$. Let $\bar{\bA}=\bPsi^{-1/2}\bA$, $\be_i$ be
the $i$-th column of the $d\times d$ identity matrix,
\begin{equation}\label{eqn:Bi}
	\bB_i=\sum\nolimits_{l=1}^{i-1}\tomega_l\,\be_l\be_l'+
	\bI+\bar{\bA}\bar{\bA}',
\end{equation}
$\bb_l$ be the $l$-th column vector of $\bB_i^{-1}$ and $b_{ll}$
stands for the $ll$-th element of $\bB_i^{-1}$. Then $\tpsi_i$ is obtained by
\begin{equation}
	\tpsi_i=\hbox{max}\left\{\left[b_{ii}^{-2} (\bb_i'\bar{\bS}\bb_i
	- b_{ii})+1\right]\psi_i, \eta\right\},\label{eqn:upsigma}
\end{equation} and the required $\tomega_i$ in \refe{eqn:Bi} is given by
\begin{equation}
	\tomega_i=\tpsi_i/\psi_i-1.\label{eqn:upomega}
\end{equation} By \refe{eqn:upsigma}, $\tpsi_l\geq\eta$ and by
\refe{eqn:upomega}, $\tomega_l>-1$, $l=1,\dots,i-1$, thus $\bB_i$ in \refe{eqn:Bi} is invertible and $\tpsi_i$ in \refe{eqn:upsigma} can always be computed.

\section*{Acknowledgements} This work was supported by the National
Natural Science Foundation of China under Grant 12161089, Grant 11761076, and partly by the Science Foundation of Yunnan under Grant 2019FB002.
\bibliography{journals,lit,jhzhao-pub}
\bibliographystyle{elsarticle-harv} 
\end{document}